\newtheorem{theorem}{Theorem}[section]
\newtheorem{lemma}[theorem]{Lemma}
\newtheorem{proposition}[theorem]{Proposition}
\newtheorem{assumption}{Assumption}
\theoremstyle{definition}
\newtheorem{definition}[theorem]{Definition}
\providecommand{\customgenericname}{}
\newcommand{\newcustomtheorem}[2]{%
  \newenvironment{#1}[1]
  {%
   \renewcommand\customgenericname{#2}%
   \renewcommand\theinnercustomgeneric{##1}%
   \innercustomgeneric
  }
  {\endinnercustomgeneric}
}
\theoremstyle{remark}
\newcommand{\abs}[1]{\ensuremath{\left|#1\right|}}
\newcommand{\norm}[2][]{\ensuremath{\left\Vert #2 \right\Vert}}
\renewcommand{\vec}[1]{\mathbf{#1}}
\newcommand{\vol}{\mathrm{Vol}}
\newcommand{\grad}{\mathrm{grad}}
\newcommand{\Proj}{\mathrm{Proj}}
\newcommand{\Retr}{\mathrm{Retr}}
\newcommand{\Hess}{\mathrm{Hess}}
\newcommand{\Exp}{\mathrm{Exp}}
\newcommand{\tr}{\mathrm{Tr}}
\newcommand{\Div}{\mathrm{div}}
\newcommand\pder[2][]{\ensuremath{\frac{\partial#1}{\partial#2}}}
\begin{document}

\title{Fast Convergence of Langevin Dynamics on Manifold: Geodesics meet Log-Sobolev}

\author{Xiao Wang\\SUTD\\xiao\_wang@sutd.edu.sg
\and Qi Lei\\Princeton\\qilei@princeton.edu
\and Ioannis Panageas\\UC Irvine\\ipanagea@uci.edu
}


\date{}

\maketitle

\begin{abstract}
Sampling is a fundamental and arguably very important task with numerous applications in Machine Learning. One approach to sample from a high dimensional distribution $e^{-f}$ for some function $f$ is the Langevin Algorithm (LA). Recently, there has been a lot of progress in showing fast convergence of LA even in cases where $f$ is non-convex, notably \cite{VW19}, \cite{MoritaRisteski} in which the former paper focuses on functions $f$ defined in $\mathbb{R}^n$ and the latter paper focuses on functions with symmetries (like matrix completion type objectives) with manifold structure. Our work generalizes the results of \cite{VW19} where $f$ is defined on a manifold $M$ rather than $\mathbb{R}^n$. From technical point of view, we show that KL decreases in a geometric rate whenever the distribution $e^{-f}$ satisfies a log-Sobolev inequality on $M$.  
\end{abstract}

\section{Introduction}
We focus on the problem of sampling from a distribution $e^{-f(x)}$ supported on a Riemannian manifold $M$ with standard volume measure. Sampling is a fundamental and arguably very important task with numerous applications in machine learning and Langevin dynamics is a quite standard approach. There is a growing interest in Langevin algorithms, e.g. \cite{Welling, Wibisono18, ArnakD}, due to its simple structure and the good empirical behavior. The classic Riemannian Langevin algorithm, e.g. \cite{GC11, PattersonTeh, ZPFP}, is used to sample from distributions supported on $\mathbb{R}^n$ (or a subset $D$) by endowing $\mathbb{R}^n$ (or $D$) a Riemannian structure. Beyond the classic application of Riemannian Langevin Algorithm (RLA), recent progress in \cite{DJMRB, MoritaRisteski, LiErdogdu} shows that sampling from a distribution on a manifold has application in matrix factorization, principal component analysis, matrix completion, solving SDP, mean field and continuous games and GANs. Formally, a game with finite number of agents is called continuous if the strategy spaces are continuous, either a finite dimensional differential manifold or an infinite dimensional Banach manifold \cite{RBS13, RBS16, DJMRB}. The mixed strategy is then a probability distribution on the strategy manifold and mixed Nash equilibria can be approximated by Langevin dynamics. 

\paragraph{Geodesic Langevin Algorithm (GLA).} In order to sample from a distribution on $M$, geodesic based algorithms (e.g. Geodesic Monte Carlo and Geodesic MCMC) are considered in \cite{BG13, LZS16}, where a geodesic integrator is used in the implementation.
We propose a Geodesic Langevin Algorithm (GLA) as a natural generalization of unadjusted Langevin algorithm (ULA) from the Euclidean space to manifold $M$. The benefit of GLA is to leverage sufficiently the geometric information (curvature, geodesic distance, isoperimetry) of $M$ while keeping the structure of the algorithm simple enough, so that we can obtain a non-asymptotic convergence guarantee of the algorithm.
In local coordinate systems, the Riemannian metric is represented by a matrix $g=\{g_{ij}\}$, see Definition \ref{rmetric}. We denote $g^{ij}$ the $ij$-th entry of the inverse matrix $g^{-1}$ of $g$, and $\abs{g}=\det(g_{ij})$, the determinant of the matrix $\{g_{ij}\}$. Then GLA is the stochastic process on $M$ that is defined by
\begin{equation}\label{GLA}
x_{k+1}=\Exp_{x_k}(\epsilon F+\sqrt{2\epsilon g^{-1}}\xi_0)
\end{equation} 
where $F=(F_1,...,F_n)$ with 
\begin{equation}\label{SDE:F}
F_i=-\sum_jg^{ij}\frac{\partial f}{\partial x_j}+\frac{1}{\sqrt{\abs{g}}}\sum_j\frac{\partial}{\partial x_j}\left(\sqrt{\abs{g}}g^{ij}\right),
\end{equation}
$\epsilon>0$ is the stepsize, $\xi_0\sim \mathcal{N}(0,I)$ is the standard Gaussian noise, and $\Exp_x(\cdot):T_xM\rightarrow M$ is the exponential map (Definition \ref{exp map}). Clearly GLA is a two-step discretization scheme of the Riemannian Langevin equation
\[
dX_t=F(X_t)dt+\sqrt{2g^{-1}}dB_t
\]
where $F$ is given by (\ref{SDE:F}). Suppose the position at time $k$ is $x_k$, then the next position $x_{k+1}$ can be obtained by the following tangent-geodesic composition:
\begin{enumerate}
\item Tangent step: Take a local coordinate chart $\varphi:U_{x_k}\rightarrow\mathbb{R}^n$ at $x_k$, this map induces the expression of $g_{ij}$ and $g^{ij}$, then compute the vector $v=\epsilon F+\sqrt{2\epsilon g^{-1}}\xi_0$ in tangent space $T_{x_k}M$;
\item Geodesic step: Solve the geodesic equation (a second order ODE) whose solution is a curve $\gamma(t)\subset\varphi(U_{x_k})$, such that the initial conditions satisfy $\gamma(0)=\varphi(x_k)$ and $\gamma'(0)=v$. Then let $x_{k+1}=\gamma(1)$ be the updated point.
\end{enumerate}
The exponential map and ODE solver for geodesic equations is commonly used in sampling algorithms on manifold, e.g. \cite{VempalaLee17, BG13, LZS16}. We will discuss on other approximations of the exponential map without solving ODEs through illustrations in a later section. Figure \ref{fig1} gives an intuition of GLA on the unit sphere where the exponential map is $\Exp_x(v)=\cos(\norm{v})x+\sin(\norm{v})\frac{v}{\norm{v}}$.
\begin{figure}[h]
\centering
\includegraphics[width=0.5\textwidth]{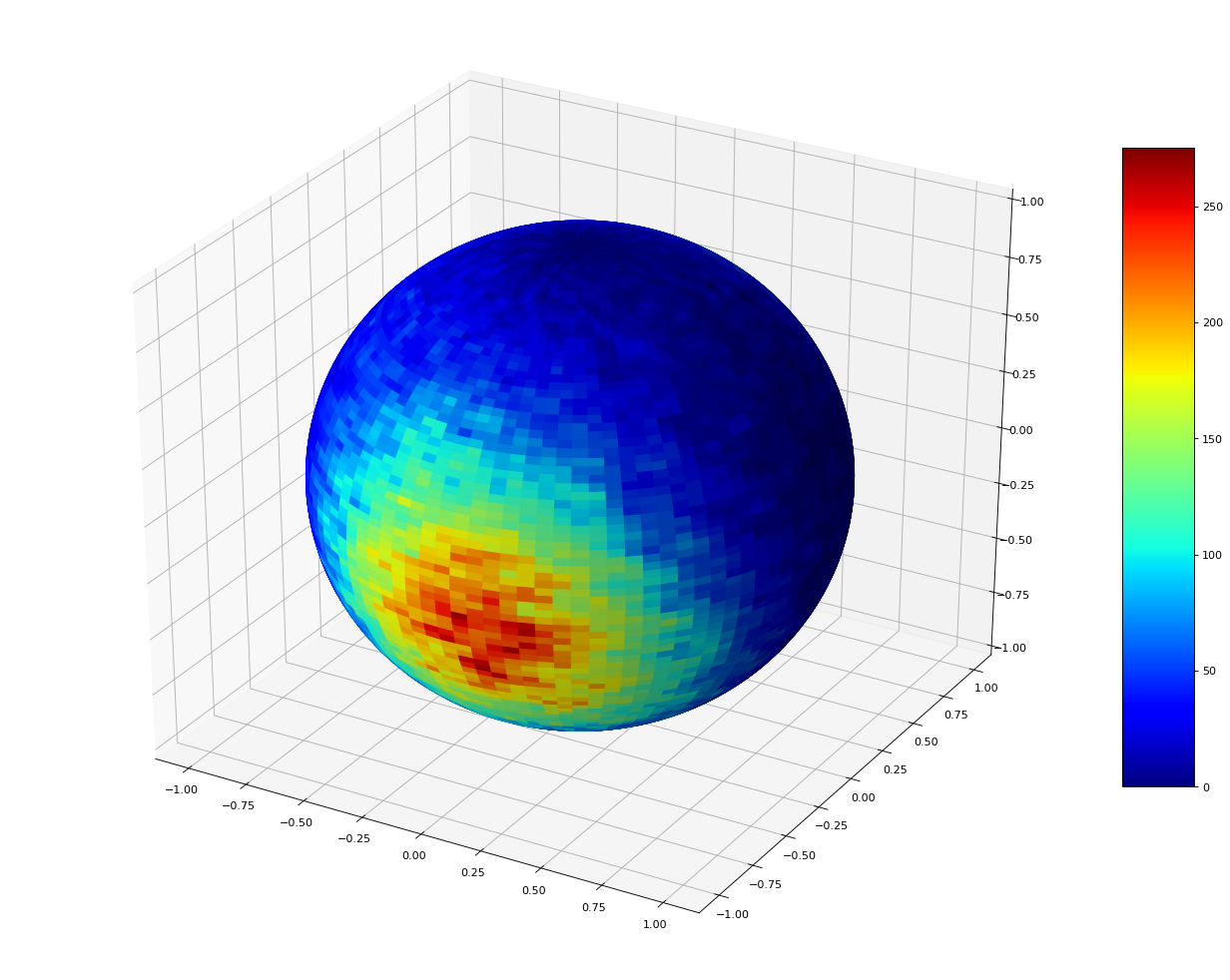}
\caption{$f(x_1,x_2,x_3)=x_1^2+3.05x_2^2-0.9x_3^2+1.1x_1x_2+-1.02x_2x_3+2.1x_3x_1$, $\epsilon=0.1$, Iterations: 100k.}
\label{fig1}
\end{figure}

The main result on convergence is stated as follows.
\begin{theorem}[Informal]\label{mtheoremintro}
Let $M$ be a closed $n$-dimensional manifold (Definition \ref{closedM}). Suppose that $\nu=e^{-f(x)}$ is a distribution on $M$ with $\alpha>0$ the log-Sobolev constant. 
Then there exists a real number $K_2,K_3,K_4,C$, such that
by choosing stepsize $\epsilon$ properly based on the Lipschitz constant of the Riemannian gradient of $f$, log-Sobolev constant of the target distribution $\nu$, dimension and curvature of $M$, the KL divergence $H(\rho_k|\nu)$ decreases along the GLA iterations rapidly in the sense that
\[
H(p_k|\nu)\le e^{-\alpha k\epsilon}H(p_0|\nu)+\frac{16\epsilon}{3\alpha}(2nL^2+2n^3K_2C+nK_3K_4).
\]
\end{theorem}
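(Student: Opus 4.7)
The plan is to adapt the Vempala-Wibisono strategy from \cite{VW19} to the Riemannian setting, carrying out the four standard stages: continuous-time exponential decay under LSI, an interpolating Fokker-Planck analysis of one GLA step, a one-step discretization bias bound, and iteration. For the continuous-time Riemannian Langevin SDE $dX_t = F(X_t)\,dt + \sqrt{2g^{-1}}\,dB_t$ on $M$ with invariant distribution $\nu = e^{-f}$, the Fokker-Planck equation on $M$ and integration by parts yield the de Bruijn identity
\[
\frac{d}{dt} H(\rho_t | \nu) = -I(\rho_t | \nu), \qquad I(\rho|\nu) := \int_M \norm{\grad \log(\rho/\nu)}^2 \rho\, d\vol,
\]
and the assumed LSI $H(\rho|\nu) \le \frac{1}{2\alpha} I(\rho|\nu)$ gives continuous-time exponential decay $H(\rho_t|\nu) \le e^{-2\alpha t} H(\rho_0|\nu)$. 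This is the template to reproduce, up to discretization error, along the GLA iterates.

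Next, on each interval $[k\epsilon,(k+1)\epsilon]$ I would introduce an interpolating process $X_t$ with $X_{k\epsilon}=x_k$ whose marginal at time $(k+1)\epsilon$ equals the law of $x_{k+1}$: in a local chart, $X_t = \Exp_{x_k}\bigl(t F(x_k) + \sqrt{2t\,g^{-1}(x_k)}\,\xi\bigr)$ for $\xi\sim\mathcal{N}(0,I)$. By It\^o's formula on $M$ the density $\rho_t$ satisfies a Fokker-Planck equation with drift \emph{frozen} at $F(x_k)$ (rather than $F(X_t)$) plus a curvature correction from the exponential map deforming the Brownian increment. Differentiating $H(\rho_t|\nu)$ along this interpolation gives
\[
\frac{d}{dt} H(\rho_t|\nu) = -I(\rho_t|\nu) + \int_M \langle \grad \log(\rho_t/\nu),\,F(x_k)-F(X_t)\rangle\, \rho_t\,d\vol + \mathcal{E}_t,
\]
where $\mathcal{E}_t$ collects the curvature and Jacobi-field corrections from the geodesic step.

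The main obstacle is controlling these two error terms uniformly on $[0,\epsilon]$. The drift mismatch $F(x_k)-F(X_t)$ is bounded using the Lipschitz constant $L$ of the Riemannian gradient of $f$ together with a sub-Gaussian bound on the geodesic distance $d(x_k,X_t)$, which is of order $\epsilon\norm{F(x_k)} + \sqrt{\epsilon}$. The curvature piece $\mathcal{E}_t$ requires Rauch or Bishop-Gromov comparison to bound Jacobi-field deviation together with the divergence term $\frac{1}{\sqrt{|g|}}\partial_j(\sqrt{|g|}g^{ij})$ appearing in (\ref{SDE:F}); these produce the dimension and curvature factors $n$, $K_2$, $K_3$, $K_4$, $C$. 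A Young inequality $\langle u,v\rangle \le \tfrac{1}{4\lambda}\norm{u}^2 + \lambda\norm{v}^2$ applied to the bias term absorbs a fraction of $I(\rho_t|\nu)$, leaving a residual of the form $C_0\,\epsilon\,(2nL^2 + 2n^3K_2C + nK_3K_4)$; combined with LSI this yields the per-step differential inequality $\tfrac{d}{dt}H(\rho_t|\nu) \le -\tfrac{3}{2}\alpha H(\rho_t|\nu) + C_0\,\epsilon\,(2nL^2 + 2n^3K_2C + nK_3K_4)$.

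Integrating this inequality over $[0,\epsilon]$ produces a one-step contraction that telescopes in $k$ via a geometric series to give the claimed bound. Beyond the error control above, the hardest technical points I anticipate are (i) writing down the Fokker-Planck equation for the geodesic interpolation in local charts, where $\Exp_{x_k}$ is only a local diffeomorphism and Christoffel symbols enter the generator, and (ii) making the dimension and curvature dependence of $\mathcal{E}_t$ explicit solely in terms of the listed constants; closedness of $M$ ensures bounded curvature and positive injectivity radius, which should allow both to go through.
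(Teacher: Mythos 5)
Your proposal follows essentially the same route as the paper: the Vempala--Wibisono template of de Bruijn identity plus LSI for the continuous flow, a frozen-drift interpolation of one GLA step analyzed through its Fokker--Planck equation in normal coordinates, a drift-mismatch term absorbed into a fraction of the Fisher information via Lipschitzness and Talagrand, curvature/metric-deviation error terms producing the constants $K_2,K_3,K_4,C$, the resulting inequality $\tfrac{d}{dt}H(p_t|\nu)\le-\tfrac{3\alpha}{2}H(p_t|\nu)+O(\epsilon)$, and a geometric-series telescoping giving the $\tfrac{16\epsilon}{3\alpha}$ prefactor. The only substantive difference is that you propose to \emph{derive} the curvature constants via Jacobi-field and Rauch-type comparison, whereas the paper simply \emph{assumes} boundedness of certain ratios involving the joint density $p_{0t}$ (an assumption its conclusion acknowledges as unnatural), so your plan is, if anything, more ambitious on that point.
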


The same as unadjusted Langevin algorithm (ULA) in Euclidean space, GLA is a biased algorithm that converges to a distribution different from $e^{-f(x)}$. But from the above theorem, one can set up the error $\delta>0$ and take stepsize $\epsilon<\frac{\alpha\delta}{2CnL^2}$ while satisfying the standard from above theorem, and GLA will reach the error $H(\rho_k|\nu)<\delta$ after $k> \frac{2}{\alpha\epsilon}\log\frac{2H(\rho_0|\nu)}{\delta}$ iterations. Practically we need a lower bound estimate for $\alpha$. With additional condition on Ricci curvature, this lower bound can be chosen based on the diameter of $M$ by Theorem \ref{logsobconst}.

Our main technical contributions are:
\begin{itemize}
\item A non-asymptotic convergence guarantee for Geodesic Langevin algorithm on closed manifold is provided, with the help of log-Sobolev inequality. 
\item The framework of this paper serves as the first step understanding to the rate of convergence in sampling from distributions on manifold with log-Sobolev inequality, and can be generalized to prove non-asymptotic convergence results for more general settings and more subtle algorithms, i.e., for open manifolds and unbiased algorithms.
\end{itemize}

\textbf{Comparison to literarture} The typical difference between algorithm (\ref{GLA}) and the classic RLA is the use of exponential map. As $\epsilon\rightarrow 0$, both GLA and RLA boil down to the same continuous time Langevin equation in the local coordinate system:
\[
dX_t=F(X_t)dt+\sqrt{2g^{-1}}dB_t
\]
where $F$ is given by (\ref{SDE:F}) and $B_t$ is the standard Brownian motion in $\mathbb{R}^n$. The direct Euler-Maryuyama discretization iterates in the way that $x_{k+1}=x_t+\epsilon F(x_k)+\sqrt{2\epsilon g^{-1}(x_k)}\xi_0$. However, by adding the vector $\epsilon F(x_k)+\sqrt{2\epsilon g^{-1}(x_k)}\xi_0$ that is in the tangent space to a point $x_k$ that is on the manifold $M$ has no intrinsic geometric meaning, since the resulted point $x_{k+1}$ is indeed not in $M$. The exponential map just gives a way to pull $x_{k+1}$ back to $M$. On the other hand, since RLA is firstly used to sample from distributions on $\mathbb{R}^n$ (or its domain) with a Riemannian structure, \cite{RS02, GC11, PattersonTeh, SDRLS18},
this requires a global coordinate system of $M$, i.e. $M$ is covered by a single coordinate chart and the iterations do not transit between different charts. But this makes it difficult to use RLA when there are inevitably multiple coordinate charts on $M$. More sophisticated algorithms like Geodesic MCMC \cite{LZS16} is used to transit between different coordinate charts, but to the best knowledge of the authors, the rate of convergence is missing in the literature. Li and Erdogdu \cite{LiErdogdu} generalize the result of \cite{VW19} by implementing the Riemannian Langevin algorithm in two steps (gradient+Riemannian Brownian motion).

\section{Related Works}
 Unadjusted Langevin algorithm (ULA) when sampling from a strongly logconcave density in  Euclidean space has been studied extensively in the literature. The bounds for ULA is known in \cite{CB, ArnakD, DalalyanKara, DMM}. 
 The case when $f$ is strongly convex and has Lipschitz gradient is studied by \cite{ArnakD17, DurmusMouline17, DurmusMouline19}. 
 Since ULA is biased because of the discretization, i.e. it converges to a limit distribution that is different from that from continuous Langevin equation. the Metropolis-Hastings correction is widely used to correct this bias, e.g. \cite{RT96, DCWY18}.
  A simplified correction algorithm is proposed by \cite{Wibisono18} that is called symmetrized Langevin algorithm with a smaller bias than ULA.
 Convergence results is obtained for Proximal Langevin algorithm (PLA) in \cite{Wibisono19}. In the case where the target distribution is log-concave, there are other algorithms proven to converge rapidly, i.e., Langevin Monte Carlo by \cite{Bernton18}, 
 ball walk and hit-and-run \cite{KLS97, LV06, LV07, LVempala06}, and Hamiltonian Monte Carlo by \cite{DMS17, VempalaLee18, MVishnoi18}. 
 The underdamped version of the Langevin dynamics under log-Sobolev inequality is studied by \cite{MaCCFBJ}, where an iteration complexity for the discrete time algorithm that has better dependence on the dimension is provided. A coupling approach is used by \cite{EGZ18} to quantify convergence to equilibrium for Langevin dynamics that yields contractions in a particular Wasserstein distance and provides precise bounds for convergence to equilibrium. 
 The case where the densities that are neither smooth nor log-concave is studied in \cite{LuuFC} and asymptotic consistency guarantees is provided. 
 For the Wasserstein distance, \cite{CCYBJ, MMSz, RRT17} provide convergence bound. An earlier research on stochastic gradient Langevin dynamics with application in Bayesian learning is proposed by \cite{Welling}, The Langevin Monte Carlo with a weaker smoothness assumption is studied by \cite{CDJB19}. In order to improve sample quality, \cite{GMackey} develops a theory of weak convergence for kernel Stein discrepancy based on Stein's method. 
   In general, sampling from non log-concave densities is hard, \cite{GeLR} gives an exponential lower bound on the number of queries required. 

The Riemannian Langevin algorithm has been studied in different extent. Related to volume computation of a convex body in Euclidean space, one can endow the interior of a convex body the structure of a Hessian manifold and run geodesic (with respect to the Hessian metric) random walk \cite{VempalaLee17} that is a discretization scheme of a stochastic process with uniform measure as the stationary distribution. The rigorous proof of the convergence of Riemannian Hamiltonian Monte Carlo for sampling Gibbs distribution and uniform distribution in a polytope is given by \cite{VempalaLee18}. In sampling non-uniform distribution, \cite{ZPFP} gives a discretization scheme related to mirror descent and a non-asymptotic upper bound on the sampling error of the Riemannian Langevin Monte Carlo algorithm in Hessian manifold. The mirrored Langevin is firstly considered by \cite{HKRC18} and a non-asymptotic rate is obtained and generalized to the case when only stochastic gradients (mini-batch) are available. 
An affine invariant perspective of continuous time Langevin dynamics for Bayesian inference is studied in \cite{INR19}. Positive curvature is used to show concentration results for Hamiltonian Monte Carlo in \cite{SRSH}. \cite{LZZ19} understand MCMC as gradient flows on Wasserstein spaces and HMC on implicitly defined manifolds is studied in \cite{BSU}.

\section{Preliminaries}
For a complete introduction to Riemannian manifold and stochastic analysis on manifold, we recommend \cite{JLee} and \cite{Hsu} for references.
\subsection{Riemannian geometry}
\begin{definition}[Manifold]
A $C^k$-differentiable, $n$-dimensional manifold is a topological space $M$, together with a collection of coordinate charts $\{(U_{\alpha},\varphi_{\alpha})\}$, where each $\varphi_{\alpha}$ is a $C^k$-diffeomorphism from an open subset $U_{\alpha}\subset M$ to $\mathbb{R}^n$. The charts are compatible in the sense that, whenever $U_{\alpha}\cap U_{\beta}\ne\emptyset$, the transition map $\varphi_{\alpha}\circ\varphi_{\beta}^{-1}(U_{\beta}\cap U_{\alpha})\rightarrow\mathbb{R}^n$ is of $C^k$.
\end{definition}

\begin{definition}[Closed manifold]\label{closedM}
A manifold $M$ is called \emph{closed} if $M$ is compact and has no boundary.
\end{definition}
Typical examples of closed manifolds include sphere and torus.

\begin{definition}[Riemannian metric]\label{rmetric}
A Riemannian manifold $(M,g)$ is a differentiable manifold $M$ with a Riemannian metric $g$ defined as the inner product on the tangent space $T_xM$ for each point $x$, $g(\cdot,\cdot):T_xM\times T_xM\rightarrow\mathbb{R}$. Then length of a smooth path $\gamma:[0,1]\rightarrow M$ is $\abs{\gamma}=\int_0^1\sqrt{g(\gamma'(t),\gamma'(t))}dt$. In a local coordinate chart, $g$ is represented by a $n\times n$ symmetric positive definite matrix with entries $g_{ij}$.
\end{definition}
\begin{definition}[Geodesic]\label{geodesic}
We call a curve $\gamma(t):[0,1]\rightarrow M$ a geodesic if it satisfies both of the following conditions:
\begin{enumerate}
\item The curve $\gamma(t)$ is parametrized with constant speed, i.e. $\norm{\frac{d}{dt}\gamma(t)}_{\gamma(t)}$ is constant for $t\in[0,1]$.
\item The curve is the locally shortest length curve between $\gamma(0)$ and $\gamma(1)$, i.e. for any family of curve $c(t,s)$ with $c(t,0)=\gamma(t)$ and $c(0,s)=\gamma(0)$ and $c(1,s)=\gamma(1)$, we have $\frac{d}{ds}|_{s=0}\int_{0}^1\norm{\frac{d}{dt}c(t,s)}_{c(t,s)}dt=0$. 
\end{enumerate}
\end{definition}
We use $\gamma_{x\rightarrow y}$ to denote the geodesic from $x$ to $y$ ($\gamma_{x\rightarrow y}(0)=x$ and $\gamma_{x\rightarrow y}(1)=y$).
The most important property of a geodesic $\gamma(t)$ is that the time derivative $\dot{\gamma}(t)$ as a vector field, has 0 covariant derivative, i.e. $\nabla_{\dot{\gamma}(t)}\dot{\gamma}(t)=0$. This property boils down to a second order ODE in local coordinate systems, 
\[
\ddot{\gamma}_i(t)+\sum_{j,k}\Gamma_{jk}^i\left(\gamma(t)\right)\dot{\gamma}_j(t)\dot{\gamma}_k(t)=0
\]  
for $i\in[n]$, where $\Gamma_{jk}^i$ are the Christoffel symbols. Given a initial position $\gamma(0)$ and initial velocity $\dot{\gamma}(0)$, by the fundamental theorem of ODE, there exists a unique solution satisfying the geodesic equation. This is the principle we can use  the ODE solver in GLA.

\begin{definition}[Exponential map]\label{exp map}
The exponential map $\Exp_x(v)$ is maps $v\in T_xM$ to $y\in M$ such that there exists a geodesic $\gamma$ with $\gamma(0)=x$, $\gamma(1)=y$ and $\gamma'(0)=v$. 
\end{definition}
The exponential map can be thought of moving a point along a vector in manifold in the sense that the exponential map in $\mathbb{R}^n$ is nothing but $\Exp_x(v)=x+v$. The  exponential map on sphere at $x$ with direction $v$ is $\Exp_x(v)=\cos(\norm{v})x+\sin(\norm{v})\frac{v}{\norm{v}}$.

\begin{definition}[Parallel transport]\label{paralleltransport}
The parallel transport $\Gamma_x^y$ is a map that transport $v\in T_xM$ to $\Gamma_x^y\in T_xM$ along $\gamma_{x\rightarrow y}$ such that the vector stays constant by satisfying a zero-acceleration condition.
\end{definition}

Next, we refer the definition of Riemannian gradient and divergence only in local coordinate systems that is used in this paper.
\begin{definition}[Gradient and Divergence]
In local coordinate system, the gradient of $f$ and the divergence of a vector field $V=\sum_iV_i\frac{\partial}{\partial x_i}$ on a Riemannian manifold is given by
\[
\grad f=\sum_{i,j}g^{ij}\frac{\partial f}{\partial x_i}\frac{\partial}{\partial x_i} \ \ \ \text{and}\ \ \ \Div V=\frac{1}{\sqrt{\abs{g}}}\sum_{i}\frac{\partial}{\partial x_i}\left(\sqrt{\abs{g}}V_i\right)
\]
where $g^{ij}$ is the $ij$-th entry of the inverse matrix $g^{-1}$ of $g$, $\abs{g}=\det(g_{ij})$.
\end{definition}
\begin{definition}[Lipschitz gradient]\label{Lipschitz}
$f$ is of Lipschitz gradient if there exists a constant $L>0$ such that
\[
\norm{\grad f(y)-\Gamma_x^y\grad f(x)}\le Ld(x,y) \ \ \text{for all}\ \ x,y\in M
\]
where $d(x,y)$ is the geodesic distance between $x$ and $y$, and $\Gamma_x^y$ is the parallel transport from $x$ to $y$, see Definition \ref{paralleltransport}.
\end{definition}

\subsection{Stochastic differential equations}
Let $\{X_t\}_{t\ge 0}$ be a stochastic process in $\mathbb{R}^n$ and $B_t$ be the standard Brownian motion in $\mathbb{R}^n$.
\\
\\
\textbf{Fokker-Planck Equation} For any stochastic differential equation of the form
\[
dX_t=F(X_t,t)dt+\sigma(X_t,t)dB_t,
\]
the probability density of the SDE is given by the PDE
\[
\frac{\partial\rho(x,t)}{\partial t}=-\sum_{i=1}^n\frac{\partial}{\partial x_i}\left(F_i(x,t)\rho(x,t)\right)+\sum_{i=1}^n\sum_{j=1}^n\frac{\partial^2}{\partial x_i\partial x_j}\left(A_{ij}(x,t)\rho(x,t)\right)
\]
where $A=\frac{1}{2}\sigma\sigma^{\top}$, i.e. $A_{ij}=\frac{1}{2}\sum_{k=1}^n\sigma_{ik}(x,t)\sigma_{jk}(x,t)$

\subsection{Distributions on manifold}
Let $\rho$ and $\nu$ be probability distribution on $M$ that is absolutely continuous with respect to the Riemannian volume measure (denoted by $dx$) on $M$.

\begin{definition}[KL divergence]
The Kullback-Leibler (KL) divergence of $\rho$ with respect to $\nu$ is
\[
H(\rho|\nu)=\int_M\rho(x)\log\frac{\rho(x)}{\nu(x)}dx.
\]
\end{definition}

\begin{definition}[Wasserstein distance]
The Wasserstein distance between $\mu$ and $\nu$ is defined to be
\[
W_2(\mu,\nu)=\inf\{\sqrt{\mathbb{E}[d(X,Y)^2]}:\text{law}(X)=\mu,\text{law}(Y)=\nu\}.
\]
\end{definition}

\begin{definition}[Talagrand inequality]\label{Tala}
The probability measure $\nu$ satisfies a Talagrand inequality with constant $\alpha>0$ if for all probability measure $\rho$, absolutely continuous with respect to $\nu$, with finite moments of order 2,
\[
W_2(\rho,\nu)^2\le\frac{2}{\alpha}H(\rho|\nu)
\]
\end{definition}

\begin{definition}[Log-Sobolev inequality]
A probability measure $\nu$ on $M$ is called to satisfy the logarithmic Sobolev inequality (LSI) if there exists a constant $\alpha>0$ such that
\[
\int_Mg^2\log g^2d\nu-\left(\int_Mg^2d\nu\right)\log\left(\int_Mg^2d\nu\right)\le\frac{2}{\alpha}\int_M\norm{\grad g}^2d\nu,
\]
for all smooth functions $g:M\rightarrow\mathbb{R}$ with $\int_Mg^2\le\infty$. The largest possible constant $\alpha$ is called the logarithmic Sobolev constant (LSC).
\end{definition}

\paragraph{Estimate of the log-Sobolev constant} It is well known that a compact manifold always satisfies log-Sobolev inequality \cite{LGross, OV, Rothaus81, Rothaus86, Wang1997, Wang1997x}.
Practically we need a specific lower bound $\alpha_0$ of the log-Sobolev constant $\alpha$, so that we can choose stepsize $\epsilon\le\frac{\alpha_0\delta}{2CnL^2}\le\frac{\alpha\delta}{2CnL^2}$ for a given error bound $\delta$ for the KL divergence, see the discussion next to Theorem \ref{mtheoremintro}. The estimate of $\alpha_0$ is closely related to the Ricci curvature of  $M$ and the first eigenvalue $\lambda_1$ of the Laplace-Beltrami operator on $M$, see the definitions in Appendix. For the case where $M$ is compact and the Ricci curvature is non-negative, the lower bound estimate for $\alpha$ is clear from the following theorem.
\begin{theorem}[Theorem 7.3, \cite{MichelLedoux}]\label{logsobconst}
Let $M$ be a compact Riemannian manifold with diameter $D$ and non-negative Ricci curvature. Then the log-Sobolev constant $\alpha$ satisfies
$
\alpha\ge\frac{\lambda_1}{1+2D\sqrt{\lambda_1}}$.
In particular,
$
\alpha\ge\frac{\pi^2}{(1+2\pi)D^2}$.
\end{theorem}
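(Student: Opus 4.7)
The plan is to bound the log-Sobolev constant from below by combining the spectral gap $\lambda_1$ of the Laplace-Beltrami operator with a heat-semigroup argument enabled by the non-negative Ricci curvature hypothesis. I would proceed in three steps.

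First, I would reduce the problem to mean-zero functions via Rothaus' lemma: for any smooth $g$ with mean $\bar g = \int_M g\,d\nu$ and $h = g - \bar g$,
\[
\int_M g^2\log g^2\,d\nu - \left(\int_M g^2\,d\nu\right)\log\int_M g^2\,d\nu \le \int_M h^2\log h^2\,d\nu + 2\int_M h^2\,d\nu.
\]
The second term is controlled immediately by the Poincar\'e inequality, $\int_M h^2\,d\nu \le \lambda_1^{-1}\int_M\|\grad g\|^2\,d\nu$, contributing a factor proportional to $1/\lambda_1$ to the final LSI constant.

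Second, I would bound the entropy of $h^2$ via a Bakry-\'Emery heat-semigroup argument. Because $\mathrm{Ric}\ge 0$ implies the curvature-dimension condition $CD(0,\infty)$, the heat semigroup $P_t$ satisfies the commutation estimate $\|\grad P_t h\|^2 \le P_t\|\grad h\|^2$. Integrating the entropy-dissipation identity $-\frac{d}{dt}\mathrm{Ent}_\nu(P_t(h^2))$ from $0$ to $\infty$, using the spectral gap $\|P_t h\|_2^2 \le e^{-2\lambda_1 t}\|h\|_2^2$ for large $t$ and the commutation bound for small $t$, yields a defective LSI whose residual $L^\infty$-type term can be closed by the diameter estimate $\|h\|_\infty \le D\|\grad h\|_\infty$ obtained by integration along a minimizing geodesic. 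Optimizing over the splitting time produces the diameter factor $2D\sqrt{\lambda_1}$.

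Combining these pieces yields $\alpha \ge \lambda_1/(1+2D\sqrt{\lambda_1})$. For the second assertion, I would apply the sharp Zhong-Yang eigenvalue bound $\lambda_1 \ge \pi^2/D^2$, which holds precisely under $\mathrm{Ric}\ge 0$; since the map $\lambda \mapsto \lambda/(1+2D\sqrt{\lambda})$ is increasing in $\lambda$, monotone substitution gives $\alpha \ge \pi^2/((1+2\pi)D^2)$. The main obstacle is obtaining the defective LSI with sharp constants: balancing the Bakry-\'Emery commutation estimate against the Rothaus reduction requires delicate tracking of constants, and in Ledoux's treatment is handled via a careful semigroup interpolation in $t$ rather than by any elementary pointwise bound.
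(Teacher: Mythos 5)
The paper does not actually prove this statement: Theorem \ref{logsobconst} is imported verbatim from \cite{MichelLedoux} (Theorem 7.3 there) and used as a black box to supply a computable lower bound on the log-Sobolev constant, so there is no internal proof to compare your attempt against. Judged on its own, your outline correctly identifies the strategy behind Ledoux's result: Rothaus' lemma to reduce to a defective LSI plus Poincar\'e, the Bakry--\'Emery commutation $\norm{\grad P_t h}^2\le P_t\norm{\grad h}^2$ available under $\mathrm{Ric}\ge 0$, the spectral decay $\norm{P_th}_2\le e^{-\lambda_1 t}\norm{h}_2$, and an optimization over the interpolation time $t$. The final reduction is also sound: Zhong--Yang gives $\lambda_1\ge\pi^2/D^2$ under non-negative Ricci curvature, and $\lambda\mapsto\lambda/(1+2D\sqrt{\lambda})$ is increasing (its derivative has numerator $1+D\sqrt{\lambda}>0$), so the substitution yielding $\pi^2/((1+2\pi)D^2)$ is legitimate.

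The genuine gap is in your second step, which is where the entire content of the theorem lives. You assert that integrating the entropy-dissipation identity and splitting the time axis "yields a defective LSI whose residual $L^\infty$-type term can be closed by the diameter estimate $\norm{h}_\infty\le D\norm{\grad h}_\infty$," but this cannot close the argument as stated: the log-Sobolev inequality must bound the entropy by the $L^2$ Dirichlet form $\int_M\norm{\grad g}^2d\nu$, whereas $\norm{\grad h}_\infty$ is not controlled by that quantity, so a residual term of the form $D\norm{\grad h}_\infty$ leaves you with an inequality of the wrong type. In Ledoux's argument the diameter enters through the local (pointwise) log-Sobolev inequality $P_t(f^2\log f^2)-P_tf^2\log P_tf^2\le 2tP_t(\norm{\grad f}^2)$ valid under $CD(0,\infty)$, integrated against the invariant measure, combined with a variance bound on $P_t(f^2)$ in which the diameter and the spectral gap appear together; the precise constant $1+2D\sqrt{\lambda_1}$ comes out of that specific bookkeeping. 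Since you explicitly defer this ("delicate tracking of constants\dots handled via a careful semigroup interpolation"), the proposal is a correct road map with the decisive quantitative step missing, and the one concrete mechanism you do offer for it would not produce an inequality of the required form.
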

The last inequality shows that we can choose a lower bound $\alpha_0=\frac{\pi^2}{(1+2\pi)D^2}$ that only depends on the diameter of $M$. For more results on estimate of the log-Sobolev constant, we refer to \cite{Wang1997, Wang1997x}.

\section{Main Results}\label{MainResults}
\subsection{Technical Overview}
\paragraph{Wasserstein gradient flow.}
The equivalence between Langevin dynamics and optimization in the space of densities is based on the result of \cite{JKO, Wibisono18} that the Langevin dynamics captures the gradient flow of the relative entropy functional in the space of densities with the Wasserstein metric. As a result, running the Langevin dynamics is equivalent to sampling from the stationary distribution of the Wasserstein gradient flow asymptotically. To minimize $\int_Mf(x)\rho(x)dx$ with respect to $\rho\in\mathcal{P}(M)$, we consider the entropy regularized functional of $\rho$ defined as follows,
\[
\mathcal{L}(\rho)=\mathcal{F}(\rho)+H(\rho)
\]
where $\mathcal{F}(\rho)=\int_Mf(x)\rho(x)dx$ and $H(\rho)=\int_M\rho(x)\log \rho(x)dx$ that is the negative Shannon entropy $h(\rho)=-\int_M\rho(x)\log \rho(x)dx$. According to \cite{FSant}, the Wasserstein gradient flow associated with functional $\mathcal{L}$ is the Fokker-Planck equation
\begin{equation}\label{FP00}
\frac{\partial\rho(x,t)}{\partial t}=\Div\left(\rho(x,t)\grad f(x)+\grad \rho(x,t)\right)=\Div\left(\rho(x,t)\grad f(x)\right)+\Delta_M\rho(x,t),
\end{equation}
where $\grad$ and $\Div$ are gradient and divergence in Riemannian manifold, and $\Delta_M$ is the Laplace-Beltrami operator generalizing the Euclidean Laplacian $\Delta$ to Riemannian manifold. More details can be found in Appendix. The stationary solution of equation (\ref{FP00}) is $e^{-f(x)}$ that minimizes the entropy regularized functional $\mathcal{L}(\rho)$, and then the optimization problem over the space of densities boils down to track the evolution of $\rho(x,t)$ that is defined by equation (\ref{FP00}). 
\paragraph{Coordinate-independent Langevin equation.}
In order to implement the aforementioned evolution of $\rho(x,t)$ in Euclidean space, one can simulate the stochastic process $\{X_t\}_{t\ge 0}$ defined by the Langevin equation:
$dX_t=-\nabla f(X_t)dt+\sqrt{2}dB_t$, 
where $B_t$ is the standard Brownian motion and $X_t$ has $\rho(x,t)$ as its density function. In contrast to the Euclidean case, we need a coordinate-independent formulation of Langevin equation, e.g. \cite{BKR}. This is derived by expanding the Fokker-Planck equation (\ref{FP00}) in a local coordinate system and is written in the following form:
\begin{equation}\label{FP:local}
dX_t=F(X_t)dt+\sqrt{2g^{-1}}dB_t
\end{equation}
where $F(X_t)$ is a vector with $i$'th component $F_i=-\sum_jg^{ij}\frac{\partial f}{\partial x_j}+\frac{1}{\sqrt{\abs{g}}}\sum_j\frac{\partial}{\partial x_j}\left(\sqrt{\abs{g}}g^{ij}\right)$ and $\abs{g}$ is the determinant of metric matrix $g_{ij}$. Note that this local form indicate the fact from Fokker-Planck equation that the process $\{X_t\}_{t\ge 0}$ is the negative gradient of $f$ followed by a manifold Brownian motion. The rate of convergence we are interested in is the classic Euler-Maryuyama discretization scheme in manifold setting, i.e. compute the vector in tangent space and project it onto the base manifold through exponential map. So the discretization error consists of two parts: one is from considering $\grad f(x_t)$ as constant in a neighborhood of $x_t$, and the other is from the approximation of a curved neighborhood of $x_t$ with the tangent space at $x_t$. 
The main task in the proofs of Theorem \ref{convergence:discrete} and \ref{convergence:constant curvature} is to bound the aforementioned two parts of errors and compare with the density evolving along continuous time Langevin equation.

\subsection{Convergence Analysis}
We state some assumptions before presenting main theorems.
\begin{assumption}\label{ass1}
$M$ is a closed manifold. 
\end{assumption}
It means $M$ is compact and has no boundary, Definition \ref{closedM}. This assumption is essentially used to make the boundary integral on $\partial M$, i.e. $\int_{\partial M}\log\frac{\rho_t}{\nu}\langle\rho_t\grad f+\grad\rho_t,\vec{n}\rangle dx$ in the proof of Lemma \ref{lemma:continuous1}, to vanish, see Appendix. This assumption can be relaxed to open manifold by assuming the integral decreases fast as $x$ approaches the infinity.
\begin{assumption}\label{ass2}
$f(x)$ is differentiable on $M$. 
\end{assumption}
An immediate consequence by combining Assumption \ref{ass1} and \ref{ass2} is that there exists a number $L>0$, such that the Riemannian gradient $\grad f$ of $f$ is $L$-Lipschitz (Definition \ref{Lipschitz}) due to the compactness of $M$. Another crucial property used in the proof is that the target distribution $e^{-f}$ satisfies the log-Sobolev inequality, and this can also be derived by compactness of $M$. 

Since the prerequisite of convergence of GLA is the convergence of the continuous time Langevin equation, we show the KL divergence between $\rho_t$ and $\nu$ converges along the continuous time Riemannian Langevin equation. The proof is completed by the following lemma showing that $H(\rho_t|\nu)$ decreases since $\frac{d}{dt}H(\rho_t|\nu)< 0$ for all $\rho_t\ne\nu$. 
Based on the analysis of the previous section, it suffices to track the evolution of $\rho_t$ according to the Fokker-Planck equation (\ref{FP00}).
\begin{lemma}\label{lemma:continuous1}
Suppose $\rho_t$ evolves following the Fokker-Planck equation (\ref{FP00}), then
\[
\frac{d}{dt}H(\rho_t|\nu)=-\int_M\rho_t(x)\norm{\grad\log\frac{\rho_t(x)}{\nu(x)}}^2dx
\]
where $dx$ is the Riemannian volume element.
\end{lemma}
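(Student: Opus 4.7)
The plan is to rewrite the Fokker--Planck equation in the form of a continuity equation with a velocity field built from $\log(\rho_t/\nu)$, then differentiate $H(\rho_t|\nu)$ under the integral and integrate by parts on $M$. The closed-manifold assumption is exactly what kills the boundary term.

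First I would use the identity $\nu(x) = e^{-f(x)}$, so that $\grad f = -\grad \log \nu$. For the density $\rho_t$ (writing $\rho$ for brevity), a direct computation gives
\[
\rho\,\grad\log\frac{\rho}{\nu} \;=\; \grad \rho \;+\; \rho\,\grad f,
\]
so taking divergence,
\[
\Div\!\left(\rho\,\grad\log\frac{\rho}{\nu}\right) \;=\; \Delta_M \rho + \Div(\rho\,\grad f),
\]
which is exactly the right-hand side of the Fokker--Planck equation (\ref{FP00}). Hence $\partial_t \rho = \Div\!\bigl(\rho\,\grad\log(\rho/\nu)\bigr)$.

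Next I would differentiate the KL divergence. Since $\log\nu$ is time-independent and $\partial_t[\rho\log(\rho/\nu)] = (\partial_t \rho)\bigl(\log(\rho/\nu)+1\bigr)$, interchanging $\partial_t$ with the integral (justified by smoothness of $\rho_t$ and compactness of $M$) yields
\[
\frac{d}{dt}H(\rho_t|\nu) \;=\; \int_M (\partial_t \rho)\log\frac{\rho}{\nu}\,dx \;+\; \int_M \partial_t\rho\,dx.
\]
The second term vanishes by mass conservation, $\int_M \rho_t\,dx \equiv 1$. Substituting the continuity form of Fokker--Planck into the first integral gives
\[
\frac{d}{dt}H(\rho_t|\nu) \;=\; \int_M \Div\!\left(\rho\,\grad\log\frac{\rho}{\nu}\right)\log\frac{\rho}{\nu}\,dx.
\]

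Finally I would apply the Riemannian divergence theorem: for a smooth vector field $V$ and smooth scalar $\phi$,
\[
\int_M \phi\,\Div V\,dx \;=\; -\int_M \langle V,\grad\phi\rangle\,dx \;+\; \int_{\partial M}\phi\,\langle V,\vec n\rangle\,dx.
\]
Taking $V = \rho\,\grad\log(\rho/\nu)$ and $\phi = \log(\rho/\nu)$, Assumption~\ref{ass1} forces $\partial M = \emptyset$, so the boundary integral drops out and
\[
\frac{d}{dt}H(\rho_t|\nu) \;=\; -\int_M \left\langle \rho\,\grad\log\frac{\rho}{\nu},\,\grad\log\frac{\rho}{\nu}\right\rangle\,dx \;=\; -\int_M \rho_t\,\Bigl\|\grad\log\frac{\rho_t}{\nu}\Bigr\|^2 dx,
\]
which is the claimed identity.

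The main obstacle is conceptual rather than computational: one must verify that the Euclidean-style algebra survives the transition to a Riemannian manifold, in particular that $\Div(\rho\,\grad f)+\Delta_M\rho$ really factors as $\Div(\rho\,\grad\log(\rho/\nu))$ in the intrinsic sense (not merely coordinate-wise), and that the integration by parts is applied with respect to the Riemannian volume form. The closedness assumption on $M$ (Assumption~\ref{ass1}) is what makes the divergence theorem usable without any decay hypothesis, as the authors flag in the remark following that assumption. Beyond these structural checks, the argument is a clean manifold analogue of the classical de Bruijn identity.
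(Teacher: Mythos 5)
Your proposal is correct and follows essentially the same route as the paper's proof: differentiate $H(\rho_t|\nu)$ under the integral, discard the mass-conservation term, substitute the Fokker--Planck equation, and integrate by parts using closedness of $M$ to kill the boundary term. The only cosmetic difference is that you identify $\rho_t\grad f+\grad\rho_t=\rho_t\grad\log(\rho_t/\nu)$ before integrating by parts, while the paper does so afterward.
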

The proof is a straightforward calculation of the time derivative of $H(\rho_t|\nu)$, followed by the expression of $\frac{\partial \rho_t}{\partial t}$ in equation (\ref{FP00}), i.e.
\begin{align}
\frac{d}{dt}H(\rho_t|\nu)&=\int_M\frac{\partial\rho_t}{\partial t}\log\frac{\rho_t}{\nu}dx=\int_M\Div(\rho_t\grad f+\grad\rho_t)\log\frac{\rho_t}{\nu}dx
\\
&=-\int_M\rho_t\norm{\grad \log\frac{\rho_t}{\nu}}^2dx+\int_{\partial M}\log\frac{\rho_t}{\nu}\langle\rho_t\grad f+\grad\rho_t,\vec{n}\rangle dx
\end{align}
The result follows from integration by parts and the Assumption \ref{ass1}. Details are left in Appendix.

Since $M$ is compact, there exists a constant $\alpha>0$ such that the log-Sobolev inequality (LSI) holds. So we can get the following convergence of KL divergence for continuous Langevin dynamics immediately.
\begin{theorem}\label{convergence:continuous}
Suppose $\nu$ satisfies LSI with constant $\alpha>0$. Then along the Riemannian Langevin equation, i.e. the SDE (\ref{FP:local}) in local coordinate systems, the density $\rho_t$ satisfies
\[
H(\rho_t|\nu)\le e^{-2\alpha t}H(\rho_0|\nu).
\]
\end{theorem}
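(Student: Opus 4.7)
The plan is to combine Lemma \ref{lemma:continuous1} with the log-Sobolev inequality satisfied by $\nu$, which together yield a differential inequality of Gr\"onwall type for $H(\rho_t|\nu)$.

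First I would rewrite the conclusion of Lemma \ref{lemma:continuous1} in terms of the relative Fisher information
\[
I(\rho_t|\nu)=\int_M \rho_t(x)\,\Bigl\|\grad\log\frac{\rho_t(x)}{\nu(x)}\Bigr\|^2 dx,
\]
so that $\tfrac{d}{dt}H(\rho_t|\nu)=-I(\rho_t|\nu)$. The next (and main) step is to connect $I(\rho_t|\nu)$ with $H(\rho_t|\nu)$ via the LSI. I would apply the hypothesis by plugging $g=\sqrt{\rho_t/\nu}$ into the LSI of Definition (log-Sobolev inequality). Since $\int_M g^2 d\nu=\int_M \rho_t\,dx=1$, the second term on the left vanishes and the left-hand side collapses to $\int_M (\rho_t/\nu)\log(\rho_t/\nu)\,d\nu=H(\rho_t|\nu)$. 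For the right-hand side, using $\grad g = \tfrac{1}{2}g\,\grad\log(\rho_t/\nu)$ gives $\|\grad g\|^2 = \tfrac{1}{4}(\rho_t/\nu)\|\grad\log(\rho_t/\nu)\|^2$, so
\[
\int_M \|\grad g\|^2\,d\nu = \tfrac{1}{4}\int_M \rho_t\,\Bigl\|\grad\log\frac{\rho_t}{\nu}\Bigr\|^2 dx = \tfrac{1}{4}\,I(\rho_t|\nu).
\]
The LSI therefore becomes the familiar form $H(\rho_t|\nu)\le \tfrac{1}{2\alpha}I(\rho_t|\nu)$, equivalently $I(\rho_t|\nu)\ge 2\alpha H(\rho_t|\nu)$.

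Combining the two ingredients yields $\tfrac{d}{dt}H(\rho_t|\nu) = -I(\rho_t|\nu) \le -2\alpha H(\rho_t|\nu)$. An application of Gr\"onwall's lemma (or direct integration of $\tfrac{d}{dt}\log H(\rho_t|\nu)\le -2\alpha$ on the set where $H>0$) gives $H(\rho_t|\nu)\le e^{-2\alpha t}H(\rho_0|\nu)$, which is exactly the claim.

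The only conceptually nontrivial part is the substitution $g=\sqrt{\rho_t/\nu}$ in the LSI and the simplification of $\|\grad g\|^2$; everything else is an immediate consequence of Lemma \ref{lemma:continuous1} and Gr\"onwall. I anticipate no obstacle beyond the mild regularity needed to justify differentiating $H(\rho_t|\nu)$ under the integral sign and the application of the LSI to $\sqrt{\rho_t/\nu}$, both of which follow from the smoothness of $\rho_t$ guaranteed by the Fokker-Planck equation on the closed manifold $M$.
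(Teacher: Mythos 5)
Your proposal is correct and follows essentially the same route as the paper: combine Lemma \ref{lemma:continuous1} with the log-Sobolev inequality to obtain $\frac{d}{dt}H(\rho_t|\nu)\le-2\alpha H(\rho_t|\nu)$ and then integrate the resulting Gr\"onwall-type inequality. The only difference is that you spell out the substitution $g=\sqrt{\rho_t/\nu}$ showing that the LSI is equivalent to $H(\rho_t|\nu)\le\frac{1}{2\alpha}I(\rho_t|\nu)$, a step the paper's proof leaves implicit behind the phrase ``By LSI, we have.''
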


The following theorem shows that the KL divergence $H(\rho_k|\nu)$ decreases geometrically along the GLA dynamics.
\begin{theorem}\label{convergence:discrete}
Suppose $M$ is a compact manifold without boundary and $R$ is the Riemann curvature, $\nu=e^{-f}$ a density on $M$ with $\alpha>0$ the log-Sobolev constant. Then there exists a global constant $K_2,K_3,K_4,C$, such that for any $x_0\sim\rho_0$ with $H(\rho_0|\nu)\le\infty$, the iterates $x_k\sim\rho_k$ of GLA with stepsize $\epsilon\le\min\{\frac{\alpha}{4L\sqrt{L^2+n^2K_2C}},\frac{2nL^2+2n^3K_2C+nK_3K_4}{2(nL^3+n^3K_2CL)},\frac{1}{2L},\frac{1}{2\alpha}\}$ satisfty

\[
H(p_k|\nu)\le e^{-\alpha k\epsilon}H(p_0|\nu)+\frac{16\epsilon}{3\alpha}(2nL^2+2n^3K_2C+nK_3K_4)
\]
\end{theorem}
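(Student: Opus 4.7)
\textbf{Proof proposal for Theorem \ref{convergence:discrete}.}

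The plan is to mimic the Vempala--Wibisono strategy in \cite{VW19}, but replace the straight-line interpolation by a geodesic one so that the curvature of $M$ shows up in a controlled manner. Concretely, for a fixed iterate $x_k\sim\rho_k$, I interpolate between $x_k$ and $x_{k+1}$ by the SDE
\[
dY_t=F(x_k)\,dt+\sqrt{2g^{-1}(x_k)}\,dB_t,\qquad Y_0=x_k,\quad t\in[0,\epsilon],
\]
lifted to $M$ via $X_t=\Exp_{x_k}(Y_t-x_k)$, so that $X_\epsilon$ has the same law $\rho_{k+1}$ as the GLA update. Let $\rho_{0|t}(\cdot\mid x)$ be the conditional density of $X_0$ given $X_t=x$ and let $\rho_t$ be the marginal. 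The goal for the one-step analysis is an inequality of the form
\[
\frac{d}{dt}H(\rho_t\mid\nu)\ \le\ -\tfrac{3}{4}\,I(\rho_t\mid\nu)+\text{(discretization error at time }t),
\]
where $I(\rho\mid\nu)=\int_M\rho\|\grad\log(\rho/\nu)\|^2\,dx$ is the relative Fisher information and the discretization error is a sum of two manifold-specific pieces.

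The key calculation is to expand $\partial_t\rho_t$ via the Fokker--Planck equation on $M$ for the interpolated process, then differentiate $H(\rho_t\mid\nu)=\int\rho_t\log(\rho_t/\nu)\,dx$. Following Lemma \ref{lemma:continuous1}, integration by parts (valid because $M$ is closed by Assumption \ref{ass1}, so the boundary term vanishes) yields a negative Fisher term plus a cross term of the form
\[
\int_M\rho_t\Big\langle\grad\log\tfrac{\rho_t}{\nu},\ F(x_k)-F_{\text{true}}(X_t)+\text{curvature correction}\Big\rangle dx.
\]
I then split $F(x_k)-F_{\text{true}}(X_t)$ into (i) the $L$-Lipschitz gradient piece $\grad f(x_k)-\Gamma_{X_t}^{x_k}\grad f(X_t)$ bounded by $L\,d(x_k,X_t)$ via Definition \ref{Lipschitz}, and (ii) the divergence-of-metric piece $\tfrac{1}{\sqrt{|g|}}\partial_j(\sqrt{|g|}g^{ij})$, whose variation along a geodesic is controlled by the Riemann curvature $R$ and the derivatives of $g$; this is where the constants $K_2,K_3,K_4$ from Christoffel/curvature bounds enter. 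Applying Young's inequality $\langle u,v\rangle\le\tfrac{1}{4}\|u\|^2+\|v\|^2$ absorbs $\tfrac{1}{4}I(\rho_t\mid\nu)$ into the Fisher term, leaving an error expressible in terms of $\mathbb{E}\|Y_t-x_k\|^2$ and $\mathbb{E}\,d(X_t,x_k)^2$, both of order $\epsilon(nL^2+n^2K_2C)$ after a standard Itô moment estimate and a Jacobi-field comparison for the geodesic displacement versus the tangent displacement.

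Once the one-step inequality reads
\[
\frac{d}{dt}H(\rho_t\mid\nu)\ \le\ -\tfrac{3}{4}I(\rho_t\mid\nu)+8\epsilon(2nL^2+2n^3K_2C+nK_3K_4),
\]
I invoke the log-Sobolev inequality $I(\rho_t\mid\nu)\ge 2\alpha H(\rho_t\mid\nu)$ to obtain the linear ODE inequality
\[
\frac{d}{dt}H(\rho_t\mid\nu)\ \le\ -\tfrac{3\alpha}{2}H(\rho_t\mid\nu)+8\epsilon(2nL^2+2n^3K_2C+nK_3K_4).
\]
Solving over $[0,\epsilon]$ gives $H(\rho_{k+1}\mid\nu)\le e^{-\alpha\epsilon}H(\rho_k\mid\nu)+\tfrac{16\epsilon}{3\alpha}(2nL^2+2n^3K_2C+nK_3K_4)$ after using $\epsilon\le 1/(2\alpha)$ to simplify the exponential prefactor, and iterating this recursion together with the geometric-series identity $\sum_{j=0}^{k-1}e^{-\alpha j\epsilon}\le 1/(1-e^{-\alpha\epsilon})\le 2/(\alpha\epsilon)$ yields the stated bound.

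The main obstacle is the manifold-specific curvature bookkeeping: unlike the Euclidean case, the drift $F$ itself contains the metric-derivative term $\tfrac{1}{\sqrt{|g|}}\partial_j(\sqrt{|g|}g^{ij})$, and one must control both how this term changes along the geodesic from $x_k$ to $X_t$ and how the exponential map distorts the tangent-space Brownian increment. This is what forces the restrictive stepsize condition $\epsilon\le\alpha/(4L\sqrt{L^2+n^2K_2C})$ and produces the constants $K_2,K_3,K_4,C$; the Lipschitz-gradient step is a routine manifold analogue of \cite{VW19}, but the curvature corrections require Jacobi-field/Rauch comparison estimates that do not appear in the Euclidean proof.
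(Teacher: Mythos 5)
Your high-level strategy is the same as the paper's: interpolate one GLA step by a diffusion whose drift and diffusion coefficient are frozen at $x_k$, differentiate $H(\rho_t|\nu)$ along the interpolation, keep $-\tfrac{3}{4}$ of the Fisher information, bound the discretization error, apply LSI to get $\frac{d}{dt}H\le-\tfrac{3\alpha}{2}H+O(\epsilon)$, integrate over $[0,\epsilon]$, and sum a geometric series. The genuine gap is in how you handle the error from freezing the \emph{diffusion coefficient}. Because the update uses $\sqrt{2g^{-1}(x_0)}$ rather than $\sqrt{2g^{-1}(X_t)}$, the Fokker--Planck equation of the interpolated process differs from the ideal one by the second-order term $\sum_{i,j}\left(g^{ij}(x_0)-g^{ij}(x)\right)\frac{\partial^2}{\partial x_i\partial x_j}p_{t|0}$ together with the first-order $b_i$ term. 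A second-order operator mismatch is not a drift perturbation: it cannot be written as $\int_M\rho_t\langle\grad\log\frac{\rho_t}{\nu},v\rangle dx$ for any vector field $v$, so your plan of folding it into a ``curvature correction'' inside the cross term and absorbing it by Young's inequality into the Fisher information does not go through. The paper instead integrates these terms directly against $\log\frac{p_t}{\nu}$ and controls them by \emph{assuming} that the ratios $\frac{\partial^2_{ij}p_{0t}\cdot\log(p_t/\nu)}{p_{0t}}$ and $\frac{\abs{\partial_i p_{0t}}\log(p_t/\nu)}{p_{0t}}$ are bounded by $K_2$ and $K_3$. So $K_2,K_3,K_4$ are hypotheses on the joint density of $(x_0,x_t)$, not Christoffel/Jacobi-field quantities as you describe (the paper's conclusion explicitly concedes these assumptions are unnatural). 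If you want purely geometric constants via Rauch/Jacobi-field comparison you would have to actually supply that argument; it is not in the paper and is not routine.

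There is also a bookkeeping inconsistency. A differential inequality $\frac{d}{dt}H\le-\tfrac{3\alpha}{2}H+c\epsilon D$ integrates over $[0,\epsilon]$ to a per-step bound $H(\rho_{k+1}|\nu)\le e^{-\alpha\epsilon}H(\rho_k|\nu)+O(\epsilon^2)D$ (the paper gets $4\epsilon^2D$ from $c=2$); the $O(\epsilon)$ bias only appears after the geometric summation $\sum_j e^{-\alpha j\epsilon}\le\frac{4}{3\alpha\epsilon}$. Your per-step bound already carries $\frac{16\epsilon}{3\alpha}D$, so applying your own geometric-series step to it would produce an $O(1)$ bias rather than $O(\epsilon)$; and your constant $8\epsilon D$ in the differential inequality would give $\frac{32\epsilon}{3\alpha}D$, twice the stated bound. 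Finally, the stepsize condition $\epsilon\le\frac{\alpha}{4L\sqrt{L^2+n^2K_2C}}$ is used in the paper to absorb an $H(p_0|\nu)$-dependent term $\frac{8\epsilon^3(L^4+L^2n^2K_2C)}{\alpha}H(p_0|\nu)$ into the exponential prefactor, a term your sketch drops entirely.
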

The convergence of KL divergence implies the convergence of Wasserstein distance. 
\begin{proposition}
For the closed manifold $M$ with a density $\nu=e^{-f(x)}$, the iterates $x_k\sim\rho_k$ of GLA with a properly chosen stepsize satisfy
\[
W_2(\rho_k,\nu)^2\le\frac{2}{\alpha}e^{-\alpha k\epsilon}H(p_0|\nu)+\frac{32\epsilon}{3\alpha^2}(2nL^2+2n^3K_2C+nK_3K_4).
\]
\end{proposition}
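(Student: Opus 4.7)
The plan is a short three-step derivation that simply combines the KL-divergence bound of Theorem \ref{convergence:discrete} with a standard transport-inequality consequence of the log-Sobolev inequality, so I would present it as a direct calculation rather than a fresh argument.

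First I would invoke the Otto--Villani theorem, which states that on a (complete) Riemannian manifold the logarithmic Sobolev inequality with constant $\alpha$ implies the Talagrand inequality (Definition \ref{Tala}) with the same constant $\alpha$. Since Assumption \ref{ass1} gives compactness of $M$ and $\nu = e^{-f}$ satisfies LSI with constant $\alpha>0$ by the discussion preceding Theorem \ref{logsobconst}, we obtain
\[
W_2(\rho,\nu)^2 \le \frac{2}{\alpha} H(\rho|\nu)
\]
for every absolutely continuous probability measure $\rho$ on $M$ with finite second moment; in particular for $\rho = \rho_k$.

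Second, I would substitute the iteration bound of Theorem \ref{convergence:discrete} into the right-hand side. Assuming the stepsize $\epsilon$ satisfies the restriction of Theorem \ref{convergence:discrete}, this gives
\[
W_2(\rho_k,\nu)^2 \le \frac{2}{\alpha} H(\rho_k|\nu) \le \frac{2}{\alpha}\left[ e^{-\alpha k\epsilon} H(p_0|\nu) + \frac{16\epsilon}{3\alpha}\bigl(2nL^2 + 2n^3K_2C + nK_3K_4\bigr)\right],
\]
and distributing the factor $2/\alpha$ yields exactly the claimed inequality
\[
W_2(\rho_k,\nu)^2 \le \frac{2}{\alpha} e^{-\alpha k\epsilon} H(p_0|\nu) + \frac{32\epsilon}{3\alpha^2}\bigl(2nL^2 + 2n^3K_2C + nK_3K_4\bigr).
\]

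The only subtle point is the appeal to Otto--Villani on a manifold rather than on $\mathbb{R}^n$; I would cite the Riemannian version (the original setting of Otto--Villani), which applies directly here because $M$ is a closed Riemannian manifold and $\nu$ has a smooth positive density. Everything else is bookkeeping, so I do not foresee a real obstacle beyond stating the transport inequality precisely and then quoting Theorem \ref{convergence:discrete}.
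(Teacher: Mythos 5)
Your proposal is correct and follows essentially the same route as the paper's own proof: apply the Otto--Villani implication from log-Sobolev to the Talagrand inequality to get $W_2(\rho_k,\nu)^2\le\frac{2}{\alpha}H(\rho_k|\nu)$, then substitute the bound of Theorem \ref{convergence:discrete}. Your additional remark about citing the Riemannian version of Otto--Villani is a sensible precision, but no new idea is involved.
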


\begin{proof}
It is an immediate consequence from the convergence of KL divergence that the Wasserstein distance $W_2(\rho_k,\nu)$ converges rabidly since log-Sobolev inequality implies Talagrand inequality (\cite{Talagrand, OV}). Since $\nu$ satisfies log-Sobolev inequality, then we have
\begin{align}
W_2(\rho_k|\nu)^2&\le\frac{2}{\alpha}H(\rho_k|\nu)
\\
&\le\frac{2}{\alpha}\left(e^{-\alpha k\epsilon}H(p_0|\nu)+\frac{16\epsilon}{3\alpha}(2nL^2+2n^3K_2C+nK_3K_4)\right)
\\
&=\frac{2}{\alpha}e^{-\alpha k\epsilon}H(p_0|\nu)+\frac{32\epsilon}{3\alpha^2}(2nL^2+2n^3K_2C+nK_3K_4)
\end{align}
and the proof completes.
\end{proof}

\section{Conclusion}
In this paper we focus on the problem of sampling from a distribution on a Riemannian manifold and propose the Geodesic Langevin Algorithm. GLA modifies the Riemannian Langevin algorithm by using exponential map so that the algorithm is defined globally. By leveraging the geometric meaning of GLA, we provide a non-asymptotic convergence guarantee in the sense that the KL divergence (as well as the Wasserstein distance) decreases fast along the iterations of GLA. By assuming that we have full access to the geometric data of the manifold, we can control the bias between the stationary distribution of GLA and the target distribution to be arbitrarily small through the choice of stepsize. The assumptions on the joint densities are not natural and there is no obvious way to determine the constants. Further work is expected to improve the results so that they do not depend on the assumption.

\section*{Acknowledgement}
We thank Mufan (Bill) Li and Murat A. Erdogdu for pointing out the mistakes in the original version and their helpful comments on revision and correction.
Xiao Wang would like to acknowledge the NRF-NRFFAI1-2019-0003, SRG ISTD 2018 136 and NRF2019-NRF-ANR095 ALIAS grant. Qi Lei is supported by Computing Innovation Fellowship.

\bibliography{Bibli}

\begin{thebibliography}{10}

\bibitem{Boumal1}
Naman Agarwal, Nicolas Boumal, Brian Bullins, and Coralia Cartis.
\newblock Adaptive regularization with cubics on manifolds.
\newblock Mathematical Programming, 2020.

\bibitem{BKR}
G.G. Batrouni, H.~Kawai, and Pietro Rossi.
\newblock Coordinate-independent formulation of the langevin equation.
\newblock {\em Journal of Mathematical Physics}, 27, 1986.

\bibitem{Bernton18}
Espen Bernton.
\newblock Langevin monte carlo and jko splitting.
\newblock In {\em COLT}, 2018.

\bibitem{BSU}
Marcus Brubaker, Mathieu Salzmann, and Raquel Urtasun.
\newblock A family of mcmc methods on implicitly defined manifolds.
\newblock In {\em AISTATS}, 2012.

\bibitem{BG13}
Simon Byrne and Mark Girolami.
\newblock Geodesic monte carlo on embedded manifolds.
\newblock {\em Scandinavian Journal of Statistics, Theory and Applications},
  40(4), 2013.

\bibitem{CDJB19}
NS. Chatterji, J.~Diakonikolas, MI. Jordan, and Peter Bartlett.
\newblock Langevin monte carlo without smoothness.
\newblock In {\em arXiv:1905.13285}, 2019.

\bibitem{CB}
Xiang Cheng and Peter Bartlett.
\newblock Convergence of langevin mcmc in kl-divergence.
\newblock {\em Proceedings of Machine Learning Research}, 83, 2018.

\bibitem{CCYBJ}
Xiang Cheng, Niladri~S Chatterji, Yasin Abbasi-Yadkori, Peter Bartlett, and
  Michael~I Jordan.
\newblock Sharp convergence rates for langevin dynamics in the nonconvex
  setting.
\newblock {\em arXiv:1805.01648}, 2018.

\bibitem{ArnakD}
Arnak Dalalyan.
\newblock Further and stronger analogy between sampling and optimization:
  Langevin monte carlo and gradient descent.
\newblock {\em Proceedings of the Conference on Learning Theory}, 2017.

\bibitem{ArnakD17}
Arnak Dalalyan.
\newblock Theoretical guarantees for approximate sampling from smooth and
  log-concave densities.
\newblock {\em Journal of the Royal Statistical Society: Series B (Statistical
  Methodology)}, 79(3), 2017.

\bibitem{DalalyanKara}
Arnak Dalalyan and Avetik Karagulyan.
\newblock User-friendly guarantees for the langevin monte carlo with inaccurate
  gradient.
\newblock {\em Stochastic Processes and their Applications}, 2019.

\bibitem{DJMRB}
Carles Domingo-Enrich, Samy Jelassi, Arthur Mensch, Grand Rotskoff, and Joan
  Bruna.
\newblock A mean-field analysis of two-player zero-sum games.
\newblock https://arxiv.org/abs/2002.06277, 2020.

\bibitem{DMM}
Alain Durmus, Szymon Majewski, and Blazej Miasojedow.
\newblock Analysis of langevin monte carlo via convex optimization.
\newblock In {\em arXiv:1802.09188}, 2018.

\bibitem{DurmusMouline17}
Alain Durmus and Eric Moulines.
\newblock Nonasymptotic convergence analysis for the unadjusted langevin
  algorithm.
\newblock {\em The Annals of Applied Probability}, 27(3), 2017.

\bibitem{DMS17}
Alain Durmus, Eric Moulines, and Eero Saksman.
\newblock On the convergence of hamiltonian monte carlo.
\newblock In {\em arXiv:1705.00166}, 2017.

\bibitem{DurmusMouline19}
Alain Durmus and Eric Mounline.
\newblock High-dimensional bayesian inference via the unadjusted langevin
  algorithm.
\newblock {\em Bernoulli}, 25(4A), 2019.

\bibitem{DCWY18}
Raaz Dwivedi, Yuansi Chen, Martin Wainwright, and Bin Yu.
\newblock Log-concave sampling: Metropolis-hastings algorithms are fast!
\newblock {\em Proceedings of the Conference of Learning Theory}, 2018.

\bibitem{EGZ18}
Andreas Eberle, Arnaud Guillin, and Raphael Zimmer.
\newblock Coupling and quantitative contraction rates for langevin dynamics.
\newblock In {\em arXiv:1703.01617}, 2018.

\bibitem{GeLR}
Rong Ge, Holden Lee, and Andrej Risteski.
\newblock Beyond log-concavity: Provable guarantees for sampling multi-modal
  distributions using simulated tempering langevin monte carlo.
\newblock In {\em NeurIPS}, 2018.

\bibitem{GC11}
Mark Girolami and Ben Calderhead.
\newblock Riemann manifold langevin and hamiltonian monte carlo methods.
\newblock {\em Journal of the Royal Statistical Society: Series B (Statistical
  Methodology)}, 73(2), 2011.

\bibitem{GMackey}
Jackson Gorham and Lester Mackey.
\newblock Measuring sample quality with kernels.
\newblock In {\em ICML}, 2017.

\bibitem{LGross}
Leonard Gross.
\newblock Logarithmic sobolev inequalities and contractivity properties of
  semigroups.
\newblock {\em Lecture Notes in Maths}, 1563, 1993.

\bibitem{HKRC18}
Ya-Ping Hsieh, Ali Kavis, Paul Rolland, and Volkan Cevher.
\newblock Mirrored langevin dynamics.
\newblock In {\em NeurIPS}, 2018.

\bibitem{Hsu}
Elton~P. Hsu.
\newblock {\em Stochastic Analysis on Manifolds}.
\newblock American Mathematical Society, 2002.

\bibitem{ItohTanaka}
Jin ichi Itoh and Minoru Tanaka.
\newblock The dimension of a cut locus on a smooth riemannian manifold.
\newblock In {\em Tohoku Mathematical Journal}, 1998.

\bibitem{INR19}
Alfredo~Garbuno Inigo, Nikolas N\"usken, and Sebastian Reich.
\newblock Affine invariant interacting langevin dynamics for bayesian.
\newblock In {\em arXiv:1912.02859}, 2019.

\bibitem{JKO}
Richard Jordan, David Kinderlehrer, and Felix Otto.
\newblock The variational formulation of the fokker-planck equation.
\newblock {\em SIAM Journal on Mathematical Analysis}, 29(1), 1998.

\bibitem{KLS97}
R.~Kannan, L.~Lov\'asz, and M.~Simonovits.
\newblock Random walks and an $o^*(n^5)$ volume algorithm for convex bodies.
\newblock {\em Random Structures and Algorithms}, 11, 1997.

\bibitem{MichelLedoux}
Michel Ledoux.
\newblock Concentration of measure and logarithmic sobolev inequalities.
\newblock {\em S\'eminaire de probabilit\'es}, 33, 1999.

\bibitem{JLee}
John Lee.
\newblock {\em Introduction to Riemannian Manifolds}, volume 176 GTM.
\newblock Springer, 2018.

\bibitem{LiErdogdu}
Mufan Li and Murat~A. Erdogdu.
\newblock Riemannian langevin algorithm for solving semidefinite programs.
\newblock In {\em https://arxiv.org/abs/2010.11176}, 2020.

\bibitem{LZS16}
Chang Liu, Jun Zhu, and Yang Song.
\newblock Stochastic gradient geodesic mcmc methods.
\newblock In {\em NIPS}, 2016.

\bibitem{LZZ19}
Chang Liu, Jingwei Zhuo, and Jun Zhu.
\newblock Understanding mcmc dynamics as flows on the wasserstein space.
\newblock In {\em ICML}, 2019.

\bibitem{LV06}
L.~Lov\'asz and S.~Vempala.
\newblock Fast algorithm for logconcave functions: sampling, rounding,
  integration and optimization.
\newblock In {\em FOCS}, 2006.

\bibitem{LVempala06}
L.~Lov\'asz and S.~Vempala.
\newblock Hit-and-run from a corner.
\newblock {\em SIAM Journal on Computing}, 35(4), 2006.

\bibitem{LV07}
L.~Lov\'asz and S.~Vempala.
\newblock The geometry of logconcave functions and sampling algorithms.
\newblock {\em Random Structures and Algorithms}, 30(3), 2007.

\bibitem{LuuFC}
Tung~Duy Luu, Jalal Fadili, and Christophe Chesneau.
\newblock Sampling from non-smooth distribution through langevin diffution.
\newblock {\em URL https://hal.archives-ouvertes.fr/hal-01492056}, 2017.

\bibitem{MaCCFBJ}
Yi-An Ma, Niladri Chatterji, Xiang Cheng, Nicolas Flammarion, Peter Bartlett,
  and Michael~I Jordan.
\newblock Is there an analog of nesterov acceleration for mcmc?
\newblock In {\em arXiv preprint arXiv: 1902.00996}, 2019.

\bibitem{MMSz}
Mateusz Majka, Aleksandar Mijatovi\'c, and Lukasz Szpruch.
\newblock Non-asymptotic bounds for sampling algorithms without logconcavity.
\newblock {\em arXiv:1808.07105}, 2018.

\bibitem{MVishnoi18}
Oren Mangoubi and Nisheeth Vishnoi.
\newblock Dimensionally tight bounds for second-order hamiltonian monte carlo.
\newblock In {\em NeurIPS}, 2018.

\bibitem{MoritaRisteski}
Ankur Moitra and Andrej Risteski.
\newblock Fast convergence for langevin diffusion with matrix manifold
  structure.
\newblock In {\em arXiv:2002.05576}, 2020.

\bibitem{OV}
Felix Otto and Cedric Villani.
\newblock Generalization of an inequality by talagrand and links with the
  logarithmic sobolev inequality.
\newblock {\em Journal of Functional Analysis}, 173:361--400, 2000.

\bibitem{PattersonTeh}
Sam Patterson and Yee~Whye Teh.
\newblock Stochastic gradient riemannian langevin dynamics on the probability
  simplex.
\newblock In {\em NIPS}, 2013.

\bibitem{RRT17}
Maxim Raginsky, Alexander Rakhlin, and Matus Telgarsky.
\newblock Non-convex learning via stochastic gradient langevin dynamics: a
  nonasymptotic analysis.
\newblock In {\em COLT}, 2017.

\bibitem{RBS13}
Lillian Ratliff, Samuel Burden, and S.~Shankar Sastry.
\newblock Characterization and computation of local nash equilibria in
  continuous games.
\newblock In {\em Fifty-first Annual Allerton Conference}, 2013.

\bibitem{RBS16}
Lillian Ratliff, Samuel Burden, and S.~Shankar Sastry.
\newblock On the characterization of local nash equilibria in continuous games.
\newblock {\em IEEE Transactions on Automatic Control}, 61(8), 2016.

\bibitem{RS02}
Gareth Roberts and Osnat Stramer.
\newblock Langevin diffusions and metropolis-hastings algorithms.
\newblock {\em Methodology and computing in applied probability}, 4(4), 2002.

\bibitem{RT96}
Gareth Roberts and Richard Tweedie.
\newblock Exponential convergence of langevin distributions and their discrete
  approximation.
\newblock {\em Bernoulli}, 2(4), 1996.

\bibitem{Rothaus81}
O~Rothaus.
\newblock Diffusion on compact riemannian manifolds and logarithmic sobolev
  inequalities.
\newblock {\em Journal of Functional Analysis}, 42, 1981.

\bibitem{Rothaus86}
O~Rothaus.
\newblock Hypercontractivity and the bakry-emery criterion.
\newblock {\em Journal of Functional Analysis}, 65, 1986.

\bibitem{FSant}
Filippo Santanmbrogio.
\newblock {\em Optimal Transport for Applied Mathematicians}.
\newblock Birkh\"{a}user, 2015.

\bibitem{SRSH}
Christof Seiler, Simon Rubinstein-Salzedo, and Susan Holmes.
\newblock Positive curvature and hamiltonian monte carlo.
\newblock In {\em NIPS}, 2014.

\bibitem{SDRLS18}
Samuel~L. Smith, Daniel Duckworth, Semon Rezchikov, Quoc~V. Le, and Jascha
  Sohl-Dickstein.
\newblock Stochastic natural gradient descent draws posterior samples in
  function space.
\newblock In {\em NeurIPS}, 2018.

\bibitem{Talagrand}
M~Talagrand.
\newblock Transportation cost for gaussian and other product measures.
\newblock {\em Geometric and Functional Analysis}, 6, 1996.

\bibitem{VW19}
Santosh Vempala and Andre Wibisono.
\newblock Rapid convergence of the unadjusted langevin algorithm: Isoperimetry
  suffices.
\newblock In {\em NeurIPS}, 2019.

\bibitem{VempalaLee17}
Santosh~S. Vempala and Yin-Tat Lee.
\newblock Geodesic walks in polytopes.
\newblock In {\em STOC}, 2017.

\bibitem{VempalaLee18}
Santosh~S. Vempala and Yin-Tat Lee.
\newblock Convergence rate of riemannian hamiltonian monte carlo and faster
  polytope volume computation.
\newblock In {\em STOC}, 2018.

\bibitem{Wang1997x}
Feng-Yu Wang.
\newblock Logarithmic sobolev inequalities on noncompact riemannian manifolds.
\newblock {\em Probability Theory and Related Fields}, 109:417--424, 1997.

\bibitem{Wang1997}
Feng-Yu Wang.
\newblock On estimation of the logarithmic sobolev constant and gradient
  estimates of heat semigroups.
\newblock {\em Probability Theory and Related Fields}, 108:87--101, 1997.

\bibitem{Welling}
Max Welling and Yee~Whye Teh.
\newblock Bayesian learning via stochastic gradient langevin dynamics.
\newblock In {\em ICML}, 2011.

\bibitem{Wibisono18}
Andre Wibisono.
\newblock Sampling as optimization in the space of measures: The langevin
  dynamics as a composite optimization problem.
\newblock In {\em Conference on Learning Theory}, 2018.

\bibitem{Wibisono19}
Andre Wibisono.
\newblock Proximal langevin algorithm: Rapid convergence under isoperimetry.
\newblock In {\em arXiv:1911.01469}, 2019.

\bibitem{ZPFP}
Kelvin~Shuangjian Zhang, Gabriel Peyr\'e, Jalal Fadili, and Marcelo Pereyra.
\newblock Wasserstein control of mirror langevin monte carlo.
\newblock {\em arXiv:2002.04363}, 2020.

\end{thebibliography}
\bibliographystyle{plain}
\newpage
\appendix

\section{More background}
\subsection{Calculus on manifold}

\begin{definition}[Levi-Civita Connection]
Let $(M,g)$ be a Riemannian manifold. An affine connection is said to be the Levi-Civita connection if it is torsion-free. i.e.
\[
\nabla_XY-\nabla_YX=[X,Y]
\]
for every pair of vector fields $X,Y$ on $M$ and preserves the metric i.e.
\[
\nabla g=0.
\]
\end{definition}

\begin{definition}[Riemannian Volume]
Let $(M,g)$ be an orientable Riemannian manifold. The volume form on the manifold in local coordinates is given as
\[
d\vol=\sqrt{\det (g)}dx_1\wedge...\wedge dx_n.
\]
\end{definition}
We denote $\abs{g}=\det(g)$ and $dx=d\vol$ (if no ambiguities caused) for short throughout following context.
\\
The following Theorem is used to guarantee the exponential map is defined on the whole tangent space, which is equivalent to require $M$ to be complete. This property is satisfied in our setting for $M$ to be compact without boundary.
\begin{theorem}[Hopf-Rinow]
Let $(M,g)$ be a connected Riemannian manifold. Then the followings are equivalent.
\begin{enumerate}
\item The closed and bounded subsets of $M$ are compact.
\item $M$ is a complete metric space.
\item $M$ is geodescically complete: for every point $x\in M$, the exponential map $\Exp_x$ is defined on the entire tangent space $T_xM$.
\end{enumerate}
\end{theorem}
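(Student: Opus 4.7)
The plan is to prove the three-way equivalence by establishing the cycle $(1) \Rightarrow (2) \Rightarrow (3) \Rightarrow (1)$. For $(1) \Rightarrow (2)$, any Cauchy sequence in $M$ is bounded, so by hypothesis its closure is compact; a Cauchy sequence with a convergent subsequence must itself converge, so $M$ is metrically complete.

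For $(2) \Rightarrow (3)$, fix $x \in M$ and a unit vector $v \in T_xM$, and let $\gamma_v : [0, T) \to M$ be the maximal geodesic with $\gamma_v(0) = x$ and $\gamma_v'(0) = v$. Suppose for contradiction that $T < \infty$. Since $\gamma_v$ has unit speed, the Riemannian distance satisfies $d(\gamma_v(s), \gamma_v(t)) \le |s - t|$, so for any sequence $t_n \to T$ the points $\gamma_v(t_n)$ form a Cauchy sequence; by completeness it converges to some $y \in M$. A uniform local-existence result for the geodesic ODE provides a neighborhood $U$ of $y$ and $\varepsilon > 0$ such that every unit tangent vector based in $U$ generates a geodesic defined on $[-\varepsilon, \varepsilon]$. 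Applying this at $\gamma_v(t_n)$ for $t_n$ sufficiently close to $T$ extends $\gamma_v$ past $T$, contradicting maximality. Rescaling then shows $\Exp_x$ is defined on all of $T_xM$.

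For $(3) \Rightarrow (1)$, the crucial step is the Hopf--Rinow minimizing-geodesic lemma: if $\Exp_p$ is defined on all of $T_pM$, then every $q \in M$ is joined to $p$ by some minimizing geodesic. Given this, the closed ball $\overline{B_r(p)}$ is contained in $\Exp_p(\overline{D_r})$, where $D_r \subset T_pM$ is the closed Euclidean ball of radius $r$; as the continuous image of a compact set it is compact, and any closed bounded subset of $M$ is a closed subset of such an image, hence compact.

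The main technical obstacle is the minimizing-geodesic lemma itself. My approach: set $r = d(p,q)$, pick $\delta > 0$ small enough that $\Exp_p$ is a diffeomorphism on the closed ball of radius $\delta$ in $T_pM$, and let $x_0 \in S_\delta(p)$ minimize $d(\cdot, q)$ over the geodesic sphere $S_\delta(p)$, which is compact. A triangle-inequality argument together with the fact that every curve from $p$ to $q$ must cross $S_\delta(p)$ yields $d(x_0, q) = r - \delta$. Write $x_0 = \Exp_p(\delta u)$ with $|u| = 1$ and set $\gamma(s) = \Exp_p(s u)$, globally defined by hypothesis. Let
\[
A = \{ s \in [\delta, r] : d(\gamma(s), q) = r - s \}.
\]
One checks $A$ is nonempty and closed. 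The delicate step is showing $A$ is relatively open in $[\delta, r)$: at any $s_0 \in A$ with $s_0 < r$, rerun the geodesic-sphere construction around $\gamma(s_0)$ to obtain a short minimizing segment heading toward $q$, and combine the triangle inequality with the defining equality at $s_0$ to force this segment to be the continuation of $\gamma$, so that a slightly larger value also lies in $A$. Connectedness then gives $r \in A$, i.e. $\gamma(r) = q$, and $\gamma|_{[0,r]}$ is the desired minimizer. The rigidity claim (that the short segment cannot veer off $\gamma$ without producing a broken curve from $p$ to $q$ shorter than $r$) is where the proof requires the most care.
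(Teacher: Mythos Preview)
The paper does not supply its own proof of the Hopf--Rinow theorem: it is stated in the appendix as a standard background result (with the remark that it guarantees the exponential map is globally defined when $M$ is complete, and that compactness without boundary suffices for this), and no argument is given. So there is nothing in the paper to compare your proposal against.

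For what it is worth, your proposed cycle $(1)\Rightarrow(2)\Rightarrow(3)\Rightarrow(1)$ and the handling of each implication follow the standard textbook route (as in do~Carmo or Lee), and the outline is correct. The one place you flag as ``requiring the most care'' --- the rigidity step in the minimizing-geodesic lemma, where the short segment at $\gamma(s_0)$ must coincide with the continuation of $\gamma$ --- is exactly right to highlight: the usual argument is that if it did not, the concatenation of $\gamma|_{[0,s_0]}$ with the new segment would be a path of length equal to the distance $d(p,\cdot)$ at its endpoint, hence minimizing, but broken at $\gamma(s_0)$; a minimizing broken geodesic cannot have a corner (first-variation formula), so the two pieces join smoothly and uniqueness of geodesics with given initial data forces them to agree. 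You may want to state that explicitly rather than leave it as a remark.
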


The notion of differential operators, e.g. gradient, divergence and Laplacian for the differentiable functions and vector fields on Euclidean space can be generalized to Riemannian manifold. In local coordinate system, $\{\partial_i=\frac{\partial}{\partial x_i}:i\in[n]\}$ is a basis of the tangent space $T_xM$. Denote $g_{ij}$ the metric matrix, $g^{ij}$ the inverse of $g_{ij}$ and $\abs{g}=\det g_{ij}$ the determinant of matrix $g_{ij}$. Let $f$ and $V$ be differentiable function and vector field on $M$, then the Riemannian gradient of $f$ and the divergence of $V$ are written as
\[
\grad f=\sum_{i,j}g^{ij}\frac{\partial f}{\partial x_i}\partial_i \ \ \ \text{and}\ \ \ \Div V=\frac{1}{\sqrt{\abs{g}}}\sum_{i}\frac{\partial}{\partial x_i}\left(\sqrt{\abs{g}}V_i\right)
\] 
where $V_i$ is the $i$-th component of $V$.

The Laplace-Beltrami operator $\Delta_M$ acting on $f$ is defined to be the divergence of the gradient of $f$, i.e.
\[
\Delta_Mf=\Div(\grad f)=\frac{1}{\sqrt{\abs{g}}}\sum_i\frac{\partial}{\partial x_i}\left(\sqrt{\abs{g}}\sum_{j}g^{ij}\frac{\partial f}{\partial x_j}\right).
\]
In Euclidean space, $\Delta_M$ boils down to the classic Laplacian $\Delta f=\nabla\cdot(\nabla f)$.
\\
The following integration by parts formulas are used in proof of main lemmas. 
Let $M$ be a compact oriented Riemannian manifold of dimension $n$ with boundary $\partial M$. Let $X$ be a vector field on $M$. The integration by parts is given by
\[
\int_M\langle\grad f,X\rangle=-\int_M f\Div X+\int_{\partial M}f\langle X,n\rangle
\]
or Green's formula
\[
\int_M(f\Delta_M g-g\Delta_M f)=-\int_{\partial M}\left(f\frac{\partial g}{\partial n}-g\frac{\partial f}{\partial n}\right)
\]

If $\partial M$ is empty or the vector field $X$ decay sufficiently fast at infinity of $M$ provided $M$ is open, we have
\[
\int_M\langle\grad f,X\rangle=-\int_Mf\Div X.
\]

\begin{definition}[First eigenvalue of Laplacian]\label{eigenLaplacian}
The first eigenvalue $\lambda_1\ge 0$ of the Laplacian operator on $M$ is defined to be
\[
\lambda_1=\inf_{f\in C_c^{\infty}}\Big\lbrace\frac{\int_M\norm{\grad f}^2dx}{\int_M\norm{f}^2dx}\Big\rbrace.
\]
\end{definition}

\subsection{Stochastic analysis on manifold}
Recall that the standard Brownian motion in $\mathbb{R}^n$ is a random process $\{X_t\}_{t\ge 0}$ whose density evolves according to the diffusion equation 
\[
\frac{\partial\rho(x,t)}{\partial t}=\frac{1}{2}\Delta\rho(x,t).
\]
 Similarly, the Brownian motion in manifold $M$ is $M$-valued random process $\{W_t\}_{t\ge 0}$ whose density function evolves according to the diffusion equation with respect to Laplace-Beltrami operator which is the counterpart of the Laplace operator on Euclidean space.
 \[
 \frac{\partial\rho(x,t)}{\partial t}=\frac{1}{2}\Delta_M\rho(x,t).
 \]
 In local coordinate, the Laplace-Beltrami is written as
 \[
 \Delta_M=\sum_{i.j}g^{ij}\frac{\partial^2}{\partial x_i\partial x_j}+\sum_ib_i\frac{\partial}{\partial x_i},
 \]
 where 
 \begin{equation}\label{b}
 b_i=\sum_j\frac{1}{\sqrt{\abs{g}}}\frac{\partial}{\partial x_j}\left(\sqrt{\abs{g}}g^{ij}\right)=\sum_{j,k}g^{jk}\Gamma^i_{jk}.
 \end{equation}
 We can construct Brownian motion in the local coordinate as the solution of the stochastic differential equation for a process $\{X_t\}_{t\ge 0}$:
 \[
 dX_t=\frac{1}{2}b(X_t)dt+\sigma(X_t)dB_t
 \]
 where the component $b_i(X_t)$ of $b(X_t)$ is given by (\ref{b}) and $\sigma=(\sigma_{ij})$ is the unique symmetric square root of $g^{-1}=(g^{ij})$.

\section{Derivation of the GLA}
In this section, we give detailed explanation on that the Riemannian Langevin algorithm, as a stochastic process, captures the dynamics of the evolution of the density function for the stochastic process. The derivation is firstly to write the diffusion equation in local coordinate system of the manifold, and then compare the corresponding terms to the Fokker-Planck equation related to stochastic differential equation that gives insight to the local expression of Riemannian Langevin algorithm. In order to do this, recall that the density $e^{-f}$ on $M$ is the stationary solution of the PDE
\begin{align}
\frac{\partial \rho_t}{\partial t}&=\Div\left(\rho_t\grad f+\grad \rho_t\right).
\end{align}
Using the local expression of Riemannian gradient and divergence operator, this PDE can be written as
\begin{align}
\frac{\partial \rho_t}{\partial t}=&\frac{1}{\sqrt{\abs{g}}}\sum_{i=1}^n\frac{\partial}{\partial x_i}\left(\sqrt{\abs{g}}\left(\sum_j g^{ij}\frac{\partial f}{\partial x_j}\right)\rho_t+\sqrt{\abs{g}}\sum_jg^{ij}\frac{\partial \rho_t}{\partial x_j}\right)
\\
=&\frac{1}{\sqrt{\abs{g}}}\sum_i\frac{\partial}{\partial x_i}\left(\left(\sum_jg^{ij}\frac{\partial f}{\partial x_j}-\frac{1}{\sqrt{\abs{g}}}\sum_j\frac{\partial}{\partial x_j}\left(\sqrt{\abs{g}}g^{ij}\right)\right)\sqrt{\abs{g}}\rho_t\right)
\\
&+\frac{1}{\sqrt{\abs{g}}}\sum_{i,j}\frac{\partial^2}{\partial x_i\partial x_j}\left(g^{ij}\sqrt{\abs{g}}\rho_t\right)
\end{align}
Denoting $\tilde{\rho}_t=\sqrt{\abs{g}}\rho_t$, we have the Fokker-Planck equation of density  in Euclidean space as follows,
\begin{equation}\label{FP1}
\pder[\tilde{\rho}_t]{t}=-\sum_i\pder{x_i}\left(\left(\frac{1}{\sqrt{\abs{g}}}\sum_j\pder{x_j}\left(\sqrt{\abs{g}}g^{ij}\right)-\sum_jg^{ij}\pder[f]{x_j}\right)\tilde{\rho}_t\right)+\sum_{i,j}\frac{\partial^2}{\partial x_i\partial x_j}\left(g^{ij}\tilde{\rho}_t\right).
\end{equation}
Since for any stochastic differential equation of the form
\[
dX_t=F(X_t,t)dt+\sigma(X_t,t)dB_t
\]
the density $p_t$ for $X_t$ satisfies 
\begin{equation}\label{FP2}
\frac{\partial p(x,t)}{\partial t}=-\sum_{i=1}^n\frac{\partial}{\partial x_i}\left(F_i(x,t)p(x,t)\right)+\sum_{i=1}^n\sum_{j=1}^n\frac{\partial^2}{\partial x_i\partial x_j}\left(A_{ij}(x,t)p(x,t)\right)
\end{equation}
where $A=\frac{1}{2}\sigma\sigma^{\top}$, i.e. $A_{ij}=\frac{1}{2}\sum_{k=1}^n\sigma_{ik}(x,t)\sigma_{jk}(x,t)$. Compare equations (\ref{FP1}) and (\ref{FP2}), we have the drift and diffusion terms in local coordinate systems are given by
\[
F_i(x_t)=-\sum_jg^{ij}\frac{\partial f}{\partial x_j}+\frac{1}{\sqrt{\abs{g}}}\sum_j\frac{\partial}{\partial x_j}\left(\sqrt{\abs{g}}g^{ij}\right)
\]
and
\[
\sigma(x_t)=\sqrt{2(A_{ij})}=\sqrt{2(g^{ij})}=\sqrt{2g^{-1}}.
\]
So the local Langevin equation is 
\begin{equation}
dX_t=F(X_t)dt+\sqrt{2g^{-1}}dB_t.
\end{equation}
This equation describes infinitesimal evolution of $X_t$, which can be seen as a process in the tangent space of $M$. The Riemannian Langevin algorithm is the classic Euler-Maruyama discretization in the tangent space, i.e., by letting $X_t$ move in the tangent space for a positive time interval $t\in[0,\epsilon]$ with the drift and diffusion at current location. Suppose the initial point is $x_0$, the tangent vector is
\[
\epsilon F(x_0)+\sqrt{2\epsilon g^{-1}(x_0)}\xi_0
\]
where $\xi_0\sim\mathcal{N}(0,I)$ is the standard Gaussian noise. Then the updated point is obtained by mapping the vector to the base manifold via exponential map,
\[
x_1=\Exp_{x_0}\left(\epsilon F(x_0)+\sqrt{2\epsilon g^{-1}(x_0)}\xi_0\right).
\]
Renaming $x_k=x_0$ and $x_{k+1}=x_1$, we have the general form
\[
x_{k+1}=\Exp_{x_k}\left(\epsilon F(x_k)+\sqrt{2\epsilon g^{-1}(x_k)}\xi_0\right).
\]
We give the expression of the algorithm in normal coordinate, for convenience in part of the proofs of main theorems.

For any manifold $M$, and $x\in M$, $T_xM$ is isomorphic to $\mathbb{R}^n$, $\exp_x^{-1}$ gives a local coordinate system of $M$ around $x$. This is called the normal coordinates at $x$. The following lemmas are from Lee-Vampalar
\begin{lemma}
In normal coordinate, we have
\[
g_{ij}(x)=\delta_{ij}-\frac{1}{3}\sum_{kl}R_{ikjl}(x)x^kx^l+O(\abs{x}^3).
\]
\end{lemma}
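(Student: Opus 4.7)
The plan is to exploit the defining property of normal coordinates—that for each $v \in T_xM$ the radial ray $\gamma_v(t) = tv$ is the geodesic through the origin with initial velocity $v$—and to Taylor-expand $g_{ij}$ along these radial geodesics up to second order. The main tool is the Jacobi-field interpretation of the coordinate vector fields, which links the quadratic term of $g_{ij}$ directly to the Riemann curvature at the base point.

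First I record the base-point values. Since the identification $T_xM \cong \mathbb{R}^n$ used to construct the normal chart is made via a $g$-orthonormal basis, $g_{ij}(0) = \delta_{ij}$. Specializing the geodesic equation $\ddot\gamma^i + \Gamma^i_{jk}(\gamma)\dot\gamma^j\dot\gamma^k = 0$ to $\gamma_v(t)=tv$ gives $\Gamma^i_{jk}(tv)v^jv^k = 0$ for all $t$ and $v$; evaluating at $t=0$ and using the symmetry $\Gamma^i_{jk}=\Gamma^i_{kj}$ (torsion-freeness of the Levi-Civita connection) forces $\Gamma^i_{jk}(0) = 0$. By the metric-compatibility identity $\partial_k g_{ij} = g_{il}\Gamma^l_{jk} + g_{jl}\Gamma^l_{ik}$, this also forces $\partial_k g_{ij}(0)=0$, so the Taylor expansion of $g_{ij}$ around $0$ begins at the quadratic term.

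To extract the quadratic coefficient, fix $v$ and consider the variation $\gamma_s(t) = t(v+se_i)$ of $\gamma_v$ through geodesics. Its variation field is a Jacobi field $J_i$ along $\gamma_v$ with $J_i(0)=0$ and $D_tJ_i(0)=e_i$, and in normal coordinates $J_i(t) = t\,\partial_i|_{\gamma_v(t)}$, so
\[
g_{ij}(tv) \;=\; \frac{1}{t^2}\,g\bigl(J_i(t),J_j(t)\bigr).
\]
Differentiating the Jacobi equation $D_t^2 J_i + R(J_i,\dot\gamma_v)\dot\gamma_v = 0$ and using $J_i(0)=0$ gives $D_t^2 J_i(0)=0$ and $D_t^3 J_i(0) = -R(e_i,v)v$, hence
\[
J_i(t) \;=\; te_i - \tfrac{t^3}{6}\,R(e_i,v)v + O(t^5).
\]
Plugging this into the displayed expression for $g_{ij}(tv)$ and collecting powers of $t$ yields
\[
g_{ij}(tv) \;=\; \delta_{ij} - \tfrac{t^2}{6}\bigl[\,g(e_i,R(e_j,v)v)+g(e_j,R(e_i,v)v)\,\bigr] + O(t^4).
\]

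It remains to rewrite the symmetric bracket as $\tfrac{1}{3}R_{ikjl}\,v^kv^l$, so that after setting $x=tv$ one obtains the stated expansion. This is a purely algebraic manipulation using the four standard symmetries of the Riemann tensor ($R_{abcd}=-R_{bacd}$, $R_{abcd}=-R_{abdc}$, $R_{abcd}=R_{cdab}$, and the first Bianchi identity $R_{a[bcd]}=0$) together with the symmetrization over the two factors of $v$ implicit in $v^kv^l$. I expect this index bookkeeping to be the main nuisance: the coefficient $-\tfrac{1}{3}$ is notoriously sensitive to sign conventions, and different sources place the indices of $R_{ikjl}$ in different orders, so care is required to match the convention adopted in the statement. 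No analytic difficulty beyond Taylor expansion of a smooth ODE solution arises.
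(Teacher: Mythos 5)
The paper does not actually prove this lemma --- it is stated with the remark that ``the following lemmas are from Lee-Vampalar'' (i.e.\ cited from the Lee--Vempala line of work), so there is no in-paper argument to compare against. Your proof is the standard textbook derivation via Jacobi fields, and it is essentially correct and complete: the vanishing of $\Gamma^i_{jk}(0)$ and hence $\partial_k g_{ij}(0)$, the identification $J_i(t)=t\,\partial_i|_{\gamma_v(t)}$ of the coordinate fields with Jacobi fields along radial geodesics, the covariant Taylor coefficients $D_t^2J_i(0)=0$ and $D_t^3J_i(0)=-R(e_i,v)v$, and the final symmetrization via the curvature symmetries are exactly the right steps. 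Two small caveats. First, your remainder orders are over-claimed: since $D_t^4J_i(0)=-2(\nabla_vR)(e_i,v)v$ is generically nonzero, the Jacobi field expansion is only $te_i-\tfrac{t^3}{6}R(e_i,v)v+O(t^4)$ and the resulting metric expansion has remainder $O(t^3)$, not $O(t^4)$ (there is a cubic term involving $\nabla R$); this is harmless because the lemma only claims $O(|x|^3)$. Second, your argument produces the curvature evaluated at the center of the chart, whereas the statement writes $R_{ikjl}(x)$; after contraction with $x^kx^l$ the difference is absorbed into $O(|x|^3)$, but that should be said explicitly. Deferring the index bookkeeping for the factor $\tfrac13$ is reasonable here, since the paper never fixes a curvature sign convention (its definition of the Riemann tensor is commented out), but in a self-contained write-up you would need to pin down the convention and carry out that computation.
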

Under normal coordinate, the RLA can be written as
\[
x_{t+1}=\Exp_{x_t}(-\epsilon\nabla f(x_t)+\sqrt{2\epsilon}\xi_0).
\]
Note that the expression in the tangent space is exactly the same as unadjusted Langevin algorithm in Euclidean space.

\section{Missing proofs of Section \ref{MainResults}}
\subsection{Proof of Theorem \ref{convergence:continuous}}
In this section, we proof that the KL divergence decreases along the process evolving following Riemannian Langevin equation.
\\
\\
Firstly, need show that according to the SDE on manifold in local chart, the density function evolves according to Fokker-Planck/diffusion equation on this manifold.
\begin{customlemma}{\ref{lemma:continuous1}}
Suppose $\rho_t$ evolves following the Fokker-Planck equation (\ref{FP00}), then
\[
\frac{d}{dt}H(\rho_t|\nu)=-\int_M\rho_t(x)\norm{\grad\log\frac{\rho_t(x)}{\nu(x)}}^2dx
\]
where $dx$ is the Riemannian volume element.
\end{customlemma}

\begin{proof}
Since 
\begin{align}
\int_M\rho_t\frac{\partial}{\partial t}\log\frac{\rho_t}{\nu}dx&=\int_M\rho_t\frac{\partial}{\partial t}(\log\rho_t+f)dx
\\
&=\int_M\frac{\partial \rho_t}{\partial t}dx
\\
&=\frac{d}{dt}\int_M\rho_tdx=0,
\end{align}
we have
\begin{align}
\frac{d}{dt}H(\rho_t|\nu)&=\frac{d}{dt}\int_M\rho_t\log\frac{\rho_t}{\nu}dx
\\
&=\int_M\frac{d}{dt}\left(\rho_t\log\frac{\rho_t}{\nu}\right)dx
\\
&=\int_M\frac{\partial\rho_t}{\partial t}\log\frac{\rho_t}{\nu}dx+\int_M\rho_t\frac{\partial}{\partial t}\log\frac{\rho_t}{\nu}dx
\\
&=\int_M\frac{\partial\rho_t}{\partial t}\log\frac{\rho_t}{\nu}dx.
\end{align}
Plug in with diffusion equation 
\[
\frac{\partial\rho_t}{\partial t}=\Div(\rho_t\grad f+\grad \rho_t)
\]
and apply integration by parts, we obtain
\begin{align}
\frac{d}{dt}H(\rho_t|\nu)&=\int_M\Div(\rho_t\grad f+\grad \rho_t)\log\frac{\rho_t}{\nu}dx
\\
&=-\int_{M}\rho_t\norm{\grad\log\frac{\rho_t}{\nu}}^2dx
\\
&+\int_{\partial M}\log\frac{\rho_t}{\nu}\langle\rho_t\grad f+\grad\rho_t,n\rangle dx
\end{align}
Since $M$ is compact and has no boundary, the boundary integral equals to zero, then we have
\[
\frac{d}{dt}H(\rho_t|\nu)=-\int_M\rho_t\norm{\grad\log\frac{\rho_t}{\nu}}^2dx
\]
\end{proof}

\begin{customthm}{\ref{convergence:continuous}}
Suppose $\nu$ satisfies LSI with constant $\alpha>0$. Then along the Riemannian Langevin equation, i.e. the SDE (\ref{FP:local}) in local coordinate systems, the density $\rho_t$ satisfies
\[
H(\rho_t|\nu)\le e^{-2\alpha t}H(\rho_0|\nu).
\]
\end{customthm}

\begin{proof}
By LSI, we have
\[
\frac{d}{dt}H(\rho_t|\nu)\le-2\alpha H(\rho_t|\nu),
\]
multiplying both sides by $e^{\alpha t}$,
\[
e^{2\alpha t}\frac{d}{dt}H(\rho_t|\nu)\le-2\alpha e^{2\alpha t}H(\rho_t|\nu)
\]
and then
\[
e^{2\alpha t}\frac{d}{dt}H(\rho_t|\nu)+2\alpha e^{2\alpha t}H(\rho_t|\nu)=\frac{d}{dt}e^{2\alpha t}H(\rho_t|\nu)\le 0.
\]
Integrating for $0\le t\le s$, the result holds as
\[
e^{2\alpha s}H(\rho_s|\nu)-H(\rho_0|\nu)=\int_0^s\frac{d}{dt}e^{2\alpha t}H(\rho_t|\nu)\le 0.
\]
Rearranging and renaming $s$ by $t$, we conclude
\[
H(\rho_t|\nu)\le e^{-2\alpha t}H(\rho_t|\nu)
\]
\end{proof}

\subsection{Proof of Theorem \ref{convergence:discrete}}

\begin{lemma}\label{lemma:1}
Assume $\nu=e^{-f}$ is $L$-smooth. Then
\[
\mathbb{E}_{\nu}[\norm{\grad f}^2]\le nL.
\]
\end{lemma}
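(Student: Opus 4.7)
The plan is to mirror the familiar Euclidean argument: if $\nu=e^{-f}$ on $\mathbb{R}^n$ and $f$ has $L$-Lipschitz gradient, then by integration by parts $\mathbb{E}_\nu[\|\nabla f\|^2]=\mathbb{E}_\nu[\Delta f]\le nL$. Everything goes through on a closed manifold provided we use the Riemannian gradient, the Laplace-Beltrami operator, and the boundary-free integration-by-parts formula available on $M$.

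First I would rewrite the expectation using the identity $\grad f\cdot e^{-f}=-\grad(e^{-f})$, so that
\[
\mathbb{E}_\nu[\norm{\grad f}^2]=\int_M\langle\grad f,\grad f\rangle e^{-f}\,dx=-\int_M\langle\grad f,\grad e^{-f}\rangle\,dx.
\]
Since $M$ is closed (Assumption \ref{ass1}), the integration-by-parts formula recorded in the appendix has no boundary term, giving
\[
-\int_M\langle\grad f,\grad e^{-f}\rangle\,dx=\int_M\Div(\grad f)\,e^{-f}\,dx=\int_M\Delta_M f\,e^{-f}\,dx=\mathbb{E}_\nu[\Delta_M f].
\]
So the problem reduces to showing the pointwise bound $\Delta_M f(x)\le nL$ on $M$.

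Next I would translate the Lipschitz-gradient hypothesis (Definition \ref{Lipschitz}) into a Hessian bound. By definition, $\norm{\grad f(y)-\Gamma_x^y\grad f(x)}\le L\,d(x,y)$ for all $x,y\in M$. Taking $y=\Exp_x(tv)$ for a unit vector $v\in T_xM$ and letting $t\to 0$, the left-hand side divided by $t$ converges to $\norm{\Hess f(x)[v]}$, while $d(x,y)/t\to 1$. This gives $\norm{\Hess f(x)}_{\mathrm{op}}\le L$, hence every eigenvalue of the symmetric operator $\Hess f(x)$ lies in $[-L,L]$. Since $\Delta_M f=\tr(\Hess f)$ is the sum of the $n$ eigenvalues, we obtain $\Delta_M f(x)\le nL$ for every $x\in M$. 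Combining with the previous display finishes the proof.

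The main (minor) obstacle is simply ensuring that the identification $\Delta_M f=\tr(\Hess f)$ and the Hessian bound derived from the parallel-transport form of Lipschitz gradient are justified cleanly; this uses only that the Levi-Civita connection is metric and torsion-free, so that $\Hess f$ is symmetric and its trace is the Laplace-Beltrami operator. Compactness of $M$ together with differentiability of $f$ (Assumption \ref{ass2}) guarantees all integrals above are finite and the use of Fubini/integration by parts is legitimate.
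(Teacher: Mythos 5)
Your proof is correct and follows essentially the same route as the paper: rewrite $\mathbb{E}_\nu[\norm{\grad f}^2]$ as $-\int_M\langle\grad f,\grad e^{-f}\rangle\,dx$, integrate by parts on the boundaryless manifold to get $\mathbb{E}_\nu[\Delta_M f]$, and bound $\Delta_M f=\tr(\Hess f)\le nL$ using the Lipschitz-gradient hypothesis. Your final step is in fact stated more carefully than the paper's (which only gestures at this via the cryptic inequality $\Hess f\ge\frac{1}{n}\Delta_M f$), since you explicitly derive the operator-norm bound on $\Hess f$ from the parallel-transport form of the Lipschitz condition.
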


\begin{proof}
Since 
\[
\mathbb{E}_{\nu}[\norm{\grad f}^2]=\int_M\langle\grad f,\grad f\rangle e^{-f}dx=-\int_M\langle\grad e^{-f},\grad f\rangle dx,
\]
where $dx$ is the Riemannian volume element.
Integration by parts on manifold gives the following
\[
-\int_M\langle\grad e^{-f},\grad f\rangle dx=\int_Me^{-f}\Delta_Mfdx-\int_{\partial M}e^{-f}\langle\grad f,n\rangle ds
\]
where $ds$ is the area element on $\partial M$. By the assumption that $M$ is boundaryless, 
the integral on the boundary is 0. 
By the assumption $\Hess f$ is $L$-smooth and the fact that $\Hess f\ge\frac{1}{n}\Delta_Mf$, we conclude $\mathbb{E}_{\nu}[\norm{\grad f}^2]\le nL$
\end{proof}

\begin{lemma}\label{lemma:2}
Suppose $\nu$ satisfies Talagrand inequality with constant $\alpha>0$ and $L$-smooth. Then for any $\rho$,
\[
\mathbb{E}_{\rho}[\norm{\grad f}^2]\le\frac{4L^2}{\alpha}H(\rho|\nu)+2nL.
\]
\end{lemma}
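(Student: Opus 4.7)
The strategy mirrors the Euclidean argument of Vempala--Wibisono, but I need to handle the fact that $\grad f(x)$ and $\grad f(y)$ live in different tangent spaces $T_xM$ and $T_yM$. The bridge is the parallel transport $\Gamma_y^x: T_yM \to T_xM$ along the geodesic from $y$ to $x$, which is an isometry, so $\norm{\Gamma_y^x \grad f(y)} = \norm{\grad f(y)}$, together with the Lipschitz-gradient condition in Definition \ref{Lipschitz}, which reads $\norm{\grad f(x) - \Gamma_y^x \grad f(y)} \le L\, d(x,y)$.

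First I would introduce an optimal $W_2$-coupling $(X, Y)$ of $\rho$ and $\nu$, so that $\mathrm{law}(X) = \rho$, $\mathrm{law}(Y) = \nu$, and $\mathbb{E}[d(X, Y)^2] = W_2(\rho,\nu)^2$. Then for each realization apply the elementary inequality $\norm{a}^2 \le 2\norm{a-b}^2 + 2\norm{b}^2$ with $a = \grad f(X)$ and $b = \Gamma_Y^X \grad f(Y)$ (both viewed as vectors in $T_XM$), giving
\[
\norm{\grad f(X)}^2 \le 2\norm{\grad f(X) - \Gamma_Y^X \grad f(Y)}^2 + 2\norm{\Gamma_Y^X \grad f(Y)}^2 \le 2L^2 d(X,Y)^2 + 2\norm{\grad f(Y)}^2,
\]
where the second step uses Lipschitzness of the Riemannian gradient together with the isometry property of $\Gamma_Y^X$.

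Taking expectations under the coupling, the left side becomes $\mathbb{E}_{\rho}[\norm{\grad f}^2]$, the first term on the right yields $2L^2 W_2(\rho,\nu)^2$, and the second yields $2\mathbb{E}_\nu[\norm{\grad f}^2]$. I then substitute the two previously established inputs: Lemma \ref{lemma:1} to bound $\mathbb{E}_\nu[\norm{\grad f}^2] \le nL$, and the Talagrand inequality (Definition \ref{Tala}) to bound $W_2(\rho,\nu)^2 \le \tfrac{2}{\alpha} H(\rho|\nu)$. Combining these gives $\mathbb{E}_\rho[\norm{\grad f}^2] \le \tfrac{4L^2}{\alpha} H(\rho|\nu) + 2nL$, as required.

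The only subtle point, and thus the main thing to state carefully, is the use of parallel transport to reduce a comparison across distinct tangent spaces to a single-space inequality; once that is framed correctly, Lipschitzness plus the isometry of $\Gamma_Y^X$ yield the cross-term bound in a single line, and the rest is just plugging in Lemma \ref{lemma:1} and Talagrand. There is no integration-by-parts or boundary concern here, so closedness of $M$ enters only indirectly through the prior lemmas.
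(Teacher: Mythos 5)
Your proposal is correct and follows essentially the same route as the paper's proof: optimal $W_2$-coupling, comparison of $\grad f(x)$ with $\Gamma_y^x \grad f(y)$ via the Lipschitz-gradient condition and the isometry of parallel transport, the elementary inequality $\norm{a}^2 \le 2\norm{a-b}^2 + 2\norm{b}^2$, and then Talagrand plus Lemma \ref{lemma:1}. The only cosmetic difference is that the paper first bounds $\norm{\grad f(x)}$ by the triangle inequality and then squares, whereas you apply the squared inequality directly; the content is identical.
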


\begin{proof}
Let $x\sim\rho$ and $y\sim\nu$ with optimal coupling $(x,y)$ so that
\[
\mathbb{E}[d(x,y)^2]=W_2(\rho,\nu)^2.
\]
$\grad f$ is $L$-Lipschitz from the assumption that $f$ is $L$-smooth. So we have the following inequality:
\begin{align}
\norm{\grad f(x)}&\le\norm{\grad f(x)-\Gamma_y^x\grad f(y)}+\norm{\Gamma_y^x\grad f(y)}
\\
&\le Ld(x,y)+\norm{\Gamma_y^x\grad f(y)}
\\
&=Ld(x,y)+\norm{\grad f(y)}
\end{align}
where the equality follows from that parallel transport is an isometry. The same arguments as V-W gives 
\[
\norm{\grad f(x)}^2\le(Ld(x,y)+\norm{\grad f(y)})^2\le 2Ld(x,y)^2+2\norm{\grad f(y)}^2
\]
and
\begin{align}
\mathbb{E}_{\rho}[\norm{\grad f(x)}^2]&\le 2L^2\mathbb{E}[d(x,y)^2]+2\mathbb{E}_{\nu}[\norm{\grad f(y)}^2]
\\
&=2L^2W_2(\rho,\nu)^2+2\mathbb{E}_{\nu}[\norm{\grad f(y)}^2].
\end{align}
By Talagrand inequality and previous lemma, the result follows.
\end{proof}

\begin{assumption}
We next assume the existence of constants shown in the convergence result. Let the joint distribution $p_{0t}(x_0,x)$ be differentiable and assume that $\frac{\frac{\partial^2p_{0t}(x_0,x)}{\partial x_i\partial x_j}\log\frac{p_t}{\nu}}{p_{0t}(x_0,x)}$ is bounded by $K_2$ and $\frac{\abs{\frac{\partial p_{0t}(x_0,x)}{\partial x_i}}\log\frac{p_t}{\nu}}{p_{0t}(x_0,x)}$ is bounded by $K_3$. 
\end{assumption}

\begin{lemma}\label{lemma:4}
Suppose $\nu$ satisfies LSI with constant $\alpha>0$ and is $L$-smooth. If $\epsilon$ small enough, then along each step,
\[
H(p_{\epsilon}|\nu)\le e^{-\alpha\epsilon}H(p_0|\nu)+4\epsilon^2(2nL^2+2n^3K_2C+nK_3K_4)
\]
for small $\epsilon$, and
\[
H(p_{k+1}|\nu)\le e^{-\alpha\epsilon}H(p_k|\nu)+4\epsilon^2(2nL^2+2n^3K_2C+nK_3K_4).
\]
for all $k\in\mathbb{N}$. 
\end{lemma}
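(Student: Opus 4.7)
The plan is to adapt the single-step analysis of Vempala--Wibisono \cite{VW19} to the Riemannian setting by working in a normal coordinate chart centered at the current iterate. Concretely, at step $k=0$ I would interpolate one GLA update by the frozen-coefficient continuous-time process $(X_t)_{t\in[0,\epsilon]}$ satisfying
\begin{equation*}
dX_t = F(x_0)\,dt + \sqrt{2g^{-1}(x_0)}\,dB_t,\qquad X_0=x_0,
\end{equation*}
so that $X_\epsilon$ has, up to the exponential-map correction, the law of the next GLA iterate. Let $p_t$ be the marginal law of $X_t$ and $p_{0t}(x_0,x)$ the joint density. Then $p_t$ satisfies a Fokker--Planck equation in the chart with conditional drift $\mathbb{E}[F(x_0)\mid X_t=x]$ and conditional diffusion matrix $\mathbb{E}[g^{-1}(x_0)\mid X_t=x]$.

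Computing $\tfrac{d}{dt}H(p_t|\nu)$ along this interpolation exactly as in the proof of Lemma \ref{lemma:continuous1} (the boundary term vanishes by Assumption \ref{ass1}), I would obtain a decomposition
\begin{equation*}
\frac{d}{dt}H(p_t|\nu) = -\int_M p_t\,\norm{\grad\log\tfrac{p_t}{\nu}}^2 dx + \int_M p_t\,\langle \grad\log\tfrac{p_t}{\nu},\, \eta(t,x)\rangle\,dx + E(t),
\end{equation*}
where $\eta(t,x) = \mathbb{E}[F(x_0)-F(x)\mid X_t=x]$ captures the drift-discretization error and $E(t)$ collects the manifold-specific corrections arising from replacing $g^{-1}(x)$ and the divergence piece $\tfrac{1}{\sqrt{\abs{g}}}\partial_j(\sqrt{\abs{g}}g^{ij})$ by their values at $x_0$. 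Young's inequality $\langle a,b\rangle\le \tfrac12\norm{a}^2+\tfrac12\norm{b}^2$ on the cross term, combined with the log-Sobolev inequality on the remaining negative piece, yields
\begin{equation*}
\frac{d}{dt}H(p_t|\nu) \le -\alpha H(p_t|\nu) + \tfrac12\,\mathbb{E}\norm{\eta(t,X_t)}^2 + E(t).
\end{equation*}

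I would then estimate the two error terms in turn. The drift error $\mathbb{E}\norm{\eta(t,X_t)}^2$ is controlled by the Lipschitz gradient hypothesis through $\norm{\grad f(X_t)-\Gamma_{x_0}^{X_t}\grad f(x_0)}^2\le L^2 d(x_0,X_t)^2$; an It\^o estimate bounds $\mathbb{E}[d(x_0,X_t)^2]$ by a constant times $t\,(\mathbb{E}\norm{F(x_0)}^2 + n)$, and Lemmas \ref{lemma:1}--\ref{lemma:2} replace $\mathbb{E}\norm{F(x_0)}^2$ by $\tfrac{4L^2}{\alpha}H(p_0|\nu)+2nL$. The curvature error $E(t)$ is controlled termwise by the assumed bounds on the joint density: the contribution from the Taylor expansion $g_{ij}(x)=\delta_{ij}-\tfrac13 R_{ikjl}x^kx^l+O(\abs{x}^3)$ is bounded by $n^3 K_2 C$ through the hypothesis on $\partial^2 p_{0t}/\partial x_i\partial x_j \cdot \log(p_t/\nu)/p_{0t}\le K_2$, and the contribution from the divergence piece is bounded by $nK_3 K_4$ through the analogous first-derivative bound. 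Choosing $\epsilon$ small enough that the $H(p_0|\nu)$-part of the drift error can be absorbed into $-\alpha H(p_t|\nu)$ then produces the differential inequality
\begin{equation*}
\frac{d}{dt}H(p_t|\nu) \le -\alpha H(p_t|\nu) + 4\epsilon\,(2nL^2 + 2n^3K_2C + nK_3 K_4).
\end{equation*}
Multiplying by $e^{\alpha t}$ and integrating on $[0,\epsilon]$ gives the claimed one-step bound, and repeating the argument in a chart centered at $x_k$ produces the general statement.

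The hardest part will be controlling the manifold-specific error $E(t)$, which has no Euclidean analogue. Both of its pieces reduce to double integrals of $\log(p_t/\nu)$ against first or second spatial derivatives of the joint density $p_{0t}$, weighted by polynomials in the Riemann curvature tensor, which is precisely what the somewhat unusual hypotheses on $K_2$ and $K_3$ are designed to absorb. The smallness requirement on $\epsilon$ is then dictated both by the LSI-absorption step above (where $\epsilon\le\tfrac{\alpha}{4L\sqrt{L^2+n^2K_2C}}$ ensures the $\tfrac{4L^2}{\alpha}H(p_0|\nu)$ factor can be swapped for $H(p_t|\nu)$ at the cost of a tolerable error) and by the need for the interpolating diffusion to remain inside a single normal chart on $[0,\epsilon]$, captured through the constants $C$ and $K_4$ and ultimately through the injectivity radius and curvature bounds of the compact manifold $M$.
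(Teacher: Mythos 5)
Your proposal follows essentially the same route as the paper's proof: a frozen-coefficient interpolation of one GLA step in normal coordinates, a Fokker--Planck decomposition into the ideal drift/diffusion plus the two manifold corrections (the $g^{-1}(x_0)-g^{-1}(x)$ second-order term controlled by $K_2$ and the divergence piece $b_i$ controlled by $K_3K_4$), a Young-inequality/LSI absorption of the Fisher-information term, and integration of the resulting differential inequality over $[0,\epsilon]$. The only differences are bookkeeping (the paper keeps $-\tfrac{3}{4}J$, giving $-\tfrac{3\alpha}{2}H$ before relaxing to $e^{-\alpha\epsilon}$, and tracks slightly different constants), so the argument matches.
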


\begin{proof}
According to \cite{ItohTanaka}, the exponential map $\Exp_x$ is a diffeomorphism on almost all the manifold, i.e. let $\bar{U}_x$ be the closed set of vectors in $T_xM$ for which $\gamma(t)=\Exp_x(tv), t\in[0,1]$ is length minimizing, and $U_x$ be the interior and $\partial \bar{U}_x$ be its boundary. Then the exponential map is a diffeomorphism on $U_x$ and $\Exp_x(\partial\bar{U}_x)$ has measure zero.

In normal coordinates, the discretized SDE has the form of 
\[
dx_t=-\grad f(x_0)dt+\sqrt{2g^{-1}(x_0)}dB_t
\]
and the Fokker-Planck equation of this SDE is 

\begin{align}
\frac{\partial p_{t|0}(x_t|x_0)}{\partial t} &=\sum_i\frac{\partial}{\partial x_i}((\sum_jg^{ij}\frac{\partial f}{\partial x_j}(x_0))p_{t|0}(x_t|x_0))+\sum_{i,j}\frac{\partial^2}{\partial x_i\partial x_j}g^{-1}(x_0)p_{t|0}(x_t|x_0)
\\
&=\Div(p_{t|0}(x_t|x_0)\grad f(x_0))+\sum_{i,j}g^{-1}\frac{\partial^2}{\partial x_i\partial x_j}p_{t|0}(x_t|x_0)+\sum_ib_i\frac{\partial}{\partial x_i}p_{t|0}(x_t|x_0)
\\
&+\sum_{i,j}\frac{\partial^2}{\partial x_i\partial x_j}g^{-1}(x_0)p_{t|0}(x_t|x_0)-(\sum_{i,j}g^{-1}\frac{\partial^2}{\partial x_i\partial x_j}p_{t|0}(x_t|x_0)+\sum_ib_i\frac{\partial}{\partial x_i}p_{t|0}(x_t|x_0))
\\
&=\Div(p_{t|0}(x_t|x_0)\grad f(x_0))+\Delta_Mp_{t|0}(x_t|x_0)
\\
&+\sum_{i,j}(g^{ij}(x_0)-g^{ij}(x_t))\frac{\partial^2}{\partial x_i\partial x_j}p_{t|0}(x_t|x_0)-\sum_ib_i\frac{\partial}{\partial x_i}p_{t|0}(x_t|x_0)
\end{align}

\begin{align}
\frac{\partial p_t(x)}{\partial t}&=\int_{\mathbb{R}^n}\frac{\partial p_{t|0}(x|x_0)}{\partial t}p_0(x_0)\sqrt{\abs{g}}dx_0
\\
&=\int_{\mathbb{R}^n}\left(\sum_i\frac{\partial}{\partial x_i}((\sum_jg^{ij}\frac{\partial f}{\partial x_j}(x_0))p_{t|0}(x_t|x_0))+\sum_{i,j}\frac{\partial^2}{\partial x_i\partial x_j}g^{-1}(x_0)p_{t|0}(x_t|x_0)\right)p_0(x_0)\sqrt{\abs{g}}dx_0
\\
&=\int_{\mathbb{R}^n}\left(\Div(p_{t|0}(x|x_0)\grad f(x_0))+\Delta_Mp_{t|0}(x|x_0)\right)p_0(x_0)\sqrt{\abs{g}}dx_0
\\
&+\int_{\mathbb{R}^n}\left(\sum_{i,j}(g^{ij}(x_0)-g^{ij}(x))\frac{\partial^2}{\partial x_i\partial x_j}p_{t|0}(x|x_0)\right)p_0(x_0)\sqrt{\abs{g}}dx_0
\\
&-\int_{\mathbb{R}^n}\left(\sum_ib_i\frac{\partial}{\partial x_i}p_{t|0}(x|x_0)\right)p_0(x_0)\sqrt{\abs{g}}dx_0
\end{align}

\begin{align}
&\int_{\mathbb{R}^n}\left(\Div(p_{t|0}(x|x_0)\grad f(x_0))+\Delta_Mp_{t|0}(x|x_0)\right)p_0(x_0)\sqrt{\abs{g}}dx_0
\\
&=\int_{\mathbb{R}^n}\Div(p_{t0}(x,x_0)\grad f(x_0))\sqrt{\abs{g}}dx_0+\Delta_Mp_t(x)
\\
&=\Div\left(p_t(x)\int_{\mathbb{R}^n}p_{0|t}(x_0|x)\grad f(x_0)dx_0\right)+\Delta_Mp_t(x)
\\
&=\Div\left(p_t(x)\mathbb{E}_{p_{0|t}}[\grad f(x_0)|x_t=x]\right)+\Delta_Mp_t(x)
\end{align}

\begin{align}
&\int_{\mathbb{R}^n}\left(\sum_{i,j}(g^{ij}(x_0)-g^{ij}(x))\frac{\partial^2}{\partial x_i\partial x_j}p_{t|0}(x|x_0)\right)p_0(x_0)\sqrt{\abs{g}}dx_0
\\
&=\sum_{i,j}\int_{\mathbb{R}^n}(g^{ij}(x_0)-g^{ij}(x))\frac{\partial^2}{\partial x_i\partial x_j}p_{t|0}(x|x_0)p_0(x_0)\sqrt{\abs{g}}dx_0
\\
&\le \sum_{i,j}\int_{\mathbb{R}^n}\abs{g^{ij}(x_0)-g^{ij}(x)}\cdot\abs{\frac{\partial^2}{\partial x_i\partial x_j}p_{t|0}(x|x_0)p_0(x_0)}\sqrt{\abs{g}}dx_0
\\
&\le \sum_{i,j}\int_{\mathbb{R}^n}O(\norm{x-x_0}^2)\abs{\frac{\partial^2}{\partial x_i\partial x_j}p_{t|0}(x|x_0)p_0(x_0)}\sqrt{\abs{g}}dx_0
\\
&=\sum_{i,j}\int_{\mathbb{R}^n}O(\norm{x-x_0}^2)\abs{\frac{\partial^2}{\partial x_i\partial x_j}p_{0t}(x_0,x)}\sqrt{\abs{g}}dx_0
\\
&\le n^2K_1\int_{\mathbb{R}^n}O(\norm{x-x_0}^2)\sqrt{\abs{g}}dx_0
\end{align}
where $K_1$ is the upper bound of $\abs{\frac{\partial^2}{\partial x_i\partial x_j}p_{0t}(x_0,x)}$

\begin{align}\label{eq:n1}
&\int_{\mathbb{R	}^n}\left(\int_{\mathbb{R}^n}\left(\sum_{i,j}(g^{ij}(x_0)-g^{ij}(x))\frac{\partial^2}{\partial x_i\partial x_j}p_{t|0}(x|x_0)\right)p_0(x_0)\sqrt{\abs{g}}dx_0\right)\log\frac{p_t}{\nu}\sqrt{\abs{g}}dx
\\
&\le\sum_{ij}\int_{\mathbb{R}^n}\left(\int_{\mathbb{R}^n}\abs{g^{ij}(x_0)-g^{ij}(x)}\abs{\frac{\partial^2}{\partial x_i\partial x_j}p_{0t}(x_0,x)}\sqrt{\abs{g}}dx_0\right)\abs{\log\frac{p_t}{\nu}}\sqrt{\abs{g}}dx
\end{align}

Let
\[
\tilde{p}(x_0,x)=\frac{\frac{\partial^2p_{t0}(x_0,x)}{\partial x_i\partial x_j}\log\frac{p_t}{\nu}}{p_{t0}(x_0,x)}
\]
and assume that $\abs{\tilde{p}(x_0,x)}$ is bounded by $K_2$. Then (\ref{eq:n1}) is bounded by $n^2K_2\mathbb{E}_{p_{t0}}\left[O\left(\norm{-t\grad f(x_0)+\sqrt{2t}z}^2\right)\right]$.

\begin{align}
&\int_{\mathbb{R}^n}\left(\int_{\mathbb{R}^n}\left(\sum_{ij}b_i\frac{\partial p_{0t}(x_0,x)}{\partial x_i}\right)\sqrt{\abs{g(x_0)}}dx_0\right)\log\frac{p_t}{\nu}\sqrt{\abs{g(x)}}dx
\\
&=\sum_i\int_{\mathbb{R}^n\times\mathbb{R}^n}b_i(x)\frac{\partial p_{0t}(x_0,x)}{\partial x_i}\log\frac{p_t}{\nu}d(x_0\times x)
\\
&\le K_3\int_{\mathbb{R}^n\times\mathbb{R}^n}\abs{b_i(x)}p_{0t}(x_0,x)d(x_0\times x)\ \ (\text{suppose}\ \ K_3\ge\frac{\abs{\frac{\partial p_{0t}(x_0,x)}{\partial x_i}}\log\frac{p_t}{\nu}}{p_{0t}(x_0,x)})
\\
&=K_3\sum_i\mathbb{E}_{p_{0t}}[\abs{b_i(x)}]
\\
&=K_3\sum_i\mathbb{E}_{p_{0t}}\left[\abs{b_i(x_0-t\grad f(x_0)+\sqrt{2t}z)}\right]
\end{align}

\begin{align}
&b_i(x_0-t\grad f(x_0)+\sqrt{2t}z)
\\
&=b_i(x_0)-t\langle\nabla b_i(x_0),\nabla f(x_0)\rangle+\sqrt{2t}\langle\nabla b_i(x_0),z\rangle+t\langle z,\nabla^2b_i(x_0)z\rangle.
\end{align}
and then 
\begin{align}
\mathbb{E}_{p_{0t}}\left[\abs{b_i(x_0-t\grad f(x_0)+\sqrt{2t}z)}\right]\le tK_4
\end{align}
where $K_4$ is determined by the expectation of $\langle \nabla b_i(x_0),\nabla f(x_0)\rangle$ and $\langle z,\nabla^2b_i(x_0)z\rangle$. So we have
\[
\int_{\mathbb{R}^n}\left(\int_{\mathbb{R}^n}\left(\sum_{ij}b_i\frac{\partial p_{0t}(x_0,x)}{\partial x_i}\right)\sqrt{\abs{g(x_0)}}dx_0\right)\log\frac{p_t}{\nu}\sqrt{\abs{g(x)}}dx\le tnK_3K_4.
\]

\begin{align}
\frac{d}{dt}H(p_t|\nu)=\int_{\mathbb{R}^n}\left(\int_{\mathbb{R}^n}\left(\Div(p_{t|0}(x|x_0)\grad f(x_0))+\Delta_Mp_{t|0}(x|x_0)\right)p_0(x_0)\sqrt{\abs{g}}dx_0\right)\log\frac{p_t}{\nu}dx
\end{align}

\begin{align}
\frac{d}{dt}H(p_t|\nu)&\le-\frac{3}{4}J+\frac{4t^2L^4}{\alpha}H(p_0|\nu)+2t^2nL^3+2tnL^2
\\
&+n^2K_2\mathbb{E}_{p_{t0}}\left[O\left(\norm{-t\grad f(x_0)+\sqrt{2t}z}^2\right)\right]+tnK_3K_4
\\
&\le-\frac{3}{4}J+\frac{4t^2L^4}{\alpha}H(p_0|\nu)+2t^2nL^3+2tnL^2
\\
&+n^2K_2C\left(\frac{4t^2L^2}{\alpha}H(p_0|\nu)+2t^2nL+2tn\right)+tnK_3K_4
\\
&=-\frac{3}{4}J+\frac{4t^2L^4+4t^2L^2n^2K_2C}{\alpha}H(p_0|\nu)+2t^2(nL^3+n^3K_2CL)+t(2nL^2+2n^3K_2C+nK_3K_4)
\end{align}

Let $t\le\epsilon\le \frac{2nL^2+2n^3K_2C+nK_3K_4}{2(nL^3+n^3K_2CL)}$, we have 
\begin{align}
\frac{d}{dt}H(p_t|\nu)\le -\frac{3\alpha}{2}H(p_t|\nu)+\frac{4\epsilon^2(L^4+L^2n^2k_2C)}{\alpha}H(p_0|\nu)+2\epsilon(2nL^2+2n^3K_2C+nK_3K_4)
\end{align}
Multiplying both sides by $e^{\frac{3\alpha}{2}t}$, we have
\[
\frac{d}{dt}\left(e^{\frac{3\alpha}{2}t}H(p_t|\nu)\right)\le e^{\frac{3\alpha}{2}t}\left(\frac{4\epsilon^2(L^4+L^2n^2k_2C)}{\alpha}H(p_0|\nu)+2\epsilon(2nL^2+2n^3K_2C+nK_3K_4)\right)
\]
and integrating for $t\in[0,\epsilon]$,

\begin{align}
e^{\frac{3}{2}\alpha\epsilon}H(p_{\epsilon}|\nu)-H(p_0|\nu)&\le\frac{2(e^{\frac{3\alpha}{2}\epsilon}-1)}{3\alpha}\left(\frac{4\epsilon^2(L^4+L^2n^2k_2C)}{\alpha}H(p_0|\nu)+2\epsilon(2nL^2+2n^3K_2C+nK_3K_4)\right)
\\
&\le2\epsilon\left(\frac{4\epsilon^2(L^4+L^2n^2k_2C)}{\alpha}H(p_0|\nu)+2\epsilon(2nL^2+2n^3K_2C+nK_3K_4)\right)
\end{align}

So
\begin{align}
H(p_{\epsilon}|\nu)&\le e^{-\frac{3}{2}\alpha\epsilon}\left(\frac{8\epsilon^3(L^4+L^2n^2K_2C)}{\alpha}+1\right)H(p_0|\nu)+e^{-\frac{3}{2}\alpha\epsilon}4\epsilon^2(2nL^2+2n^3K_2C+nK_3K_4)
\\
&\le e^{-\frac{3}{2}\alpha\epsilon}\left(\frac{8\epsilon^3(L^4+L^2n^2K_2C)}{\alpha}+1\right)H(p_0|\nu)+4\epsilon^2(2nL^2+2n^3K_2C+nK_3K_4).
\end{align}

If $1+\frac{8\epsilon^3(L^4+L^2n^2K_2C)}{\alpha}\le 1+\frac{\alpha\epsilon}{2}\le e^{\frac{1}{2}\alpha\epsilon}$, or $\epsilon\le\frac{\alpha}{4L\sqrt{L^2+n^2K_2C}}$,
\[
H(p_{\epsilon}|\nu)\le e^{-\alpha\epsilon}H(p_0|\nu)+4\epsilon^2(2nL^2+2n^3K_2C+nK_3K_4),
\]
and then
\[
H(p_{k+1}|\nu)\le e^{-\alpha\epsilon}H(p_k|\nu)+4\epsilon^2(2nL^2+2n^3K_2C+nK_3K_4).
\]

\end{proof}

\begin{customthm}{\ref{convergence:discrete}}
Suppose $M$ is a compact manifold without boundary and $R$ is the Riemann curvature, $\nu=e^{-f}$ a density on $M$ with $\alpha>0$ the log-Sobolev constant. Then there exists a global constant $K_2,K_3,K_4,C$, such that for any $x_0\sim\rho_0$ with $H(\rho_0|\nu)\le\infty$, the iterates $x_k\sim\rho_k$ of GLA with stepsize $\epsilon\le\min\{\frac{\alpha}{4L\sqrt{L^2+n^2K_2C}},\frac{2nL^2+2n^3K_2C+nK_3K_4}{2(nL^3+n^3K_2CL)},\frac{1}{2L},\frac{1}{2\alpha}\}$ satisfty

\[
H(p_k|\nu)\le e^{-\alpha k\epsilon}H(p_0|\nu)+\frac{16\epsilon}{3\alpha}(2nL^2+2n^3K_2C+nK_3K_4)
\]
\end{customthm}

\begin{proof}
\begin{align}
H(p_k|\nu)&\le e^{-\alpha k\epsilon}H(p_0|\nu)+\frac{1-e^{-\alpha k\epsilon}}{1-e^{-\alpha\epsilon}}4\epsilon(2nL^2+2n^3K_2C+nK_3K_4)
\\
&\le e^{-\alpha k\epsilon}H(p_0|\nu)+\frac{4}{3\alpha\epsilon}4\epsilon^2(2nL^2+2n^3K_2C+nK_3K_4)
\\
&= e^{-\alpha k\epsilon}H(p_0|\nu)+\frac{16\epsilon}{3\alpha}(2nL^2+2n^3K_2C+nK_3K_4)
\end{align}

\end{proof}

\newpage
\section{Experiments}
As mentioned before, for simplicity, we can implement GLA without using the exponential map where a geodesic ODE solver is required, especially for the case when $M$ is a submanifold of $\mathbb{R}^n$. In general, the retraction map from $T_xM$ to $M$ is used in optimization on Riemannian manifold \cite{Boumal1}, as a replacement of exponential map. In this section, we give experiments on sampling from distributions on the unit sphere in comparison of exponential map and orthogonal projection as a retraction in the geodesic step of GLA. 

The experiments are designed to verify the following properties:
\begin{enumerate}
\item GLA captures the target distribution $e^{-f}$ as expected;
\item The projection map behaves well in replacing the exponential map without solving geodesic equations.
\end{enumerate}
In each set of figures, (a) is the landscape of the ideal distribution, (b) and (c) are the results with small number of iterations for exponential map and projection, (d) and (e) are enhanced with large number of iterations. We start with the definition of the general retraction in optimization on manifold.
\begin{definition}[Retraction]
A retraction on a manifold $M$ is a smooth mapping $\Retr$ from the tangent bundle $TM$ to $M$ satisfying properties 1 and 2 below: Let $\Retr_x:T_xM\rightarrow M$ denote the restriction of $Retr$ to $T_xM$.
\begin{enumerate}
\item $\Retr_x(0)=x$, where $0$ is the zero vector in $T_xM$.
\item The differential of $\Retr_x$ at $0$ is the identity map.
\end{enumerate}
\end{definition}

Suppose $M$ is a submanifold of $\mathbb{R}^n$ with positive codimension. Denote $\Proj_{T_xM}$ the orthogonal projection to the tangent space at $x$, then the retraction can be defined as $\Retr_x(v)=\Proj_M(x+v)$. The GLA on a submanifold of $\mathbb{R}^n$ can be written as
\begin{equation}\label{VGLA}
x_{k+1}=\Retr_x\left(\Proj_{T_xM}(-\epsilon\nabla f(x_k)+\sqrt{2\epsilon}\xi_0)\right)
\end{equation}
If $M=S^{n-1}$ be the unit sphere in $\mathbb{R}^n$, then $\Retr_x(v)=\frac{x+v}{\norm{x+v}}$.

\begin{figure}
\centering
\subfigure[Ideal distribution of $e^{-f}$]{\includegraphics[width=0.4\textwidth]{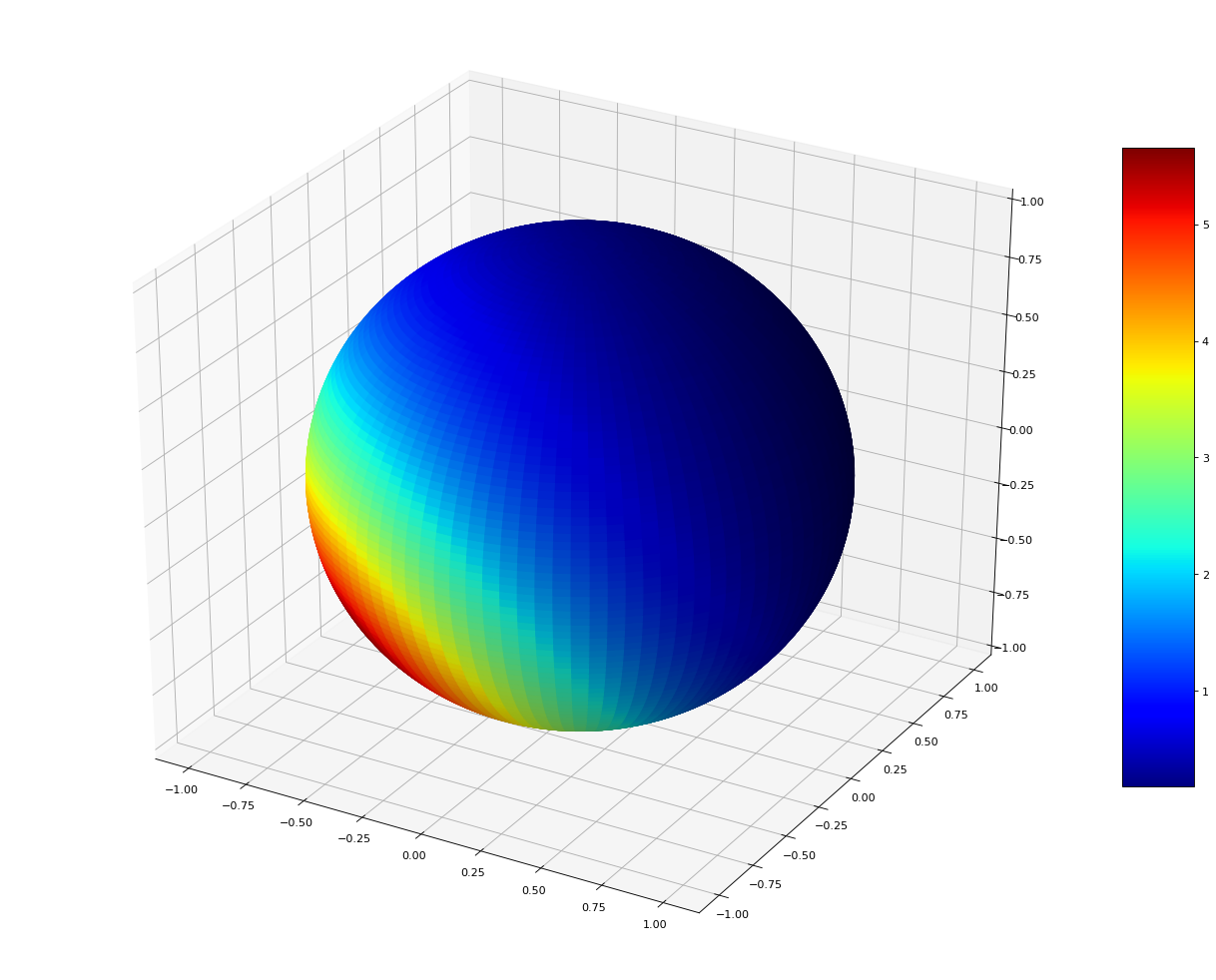}}
\\
\subfigure[$\Exp_x(v)$, iteration: 10k]{\label{fig:a}\includegraphics[width=0.4\textwidth]{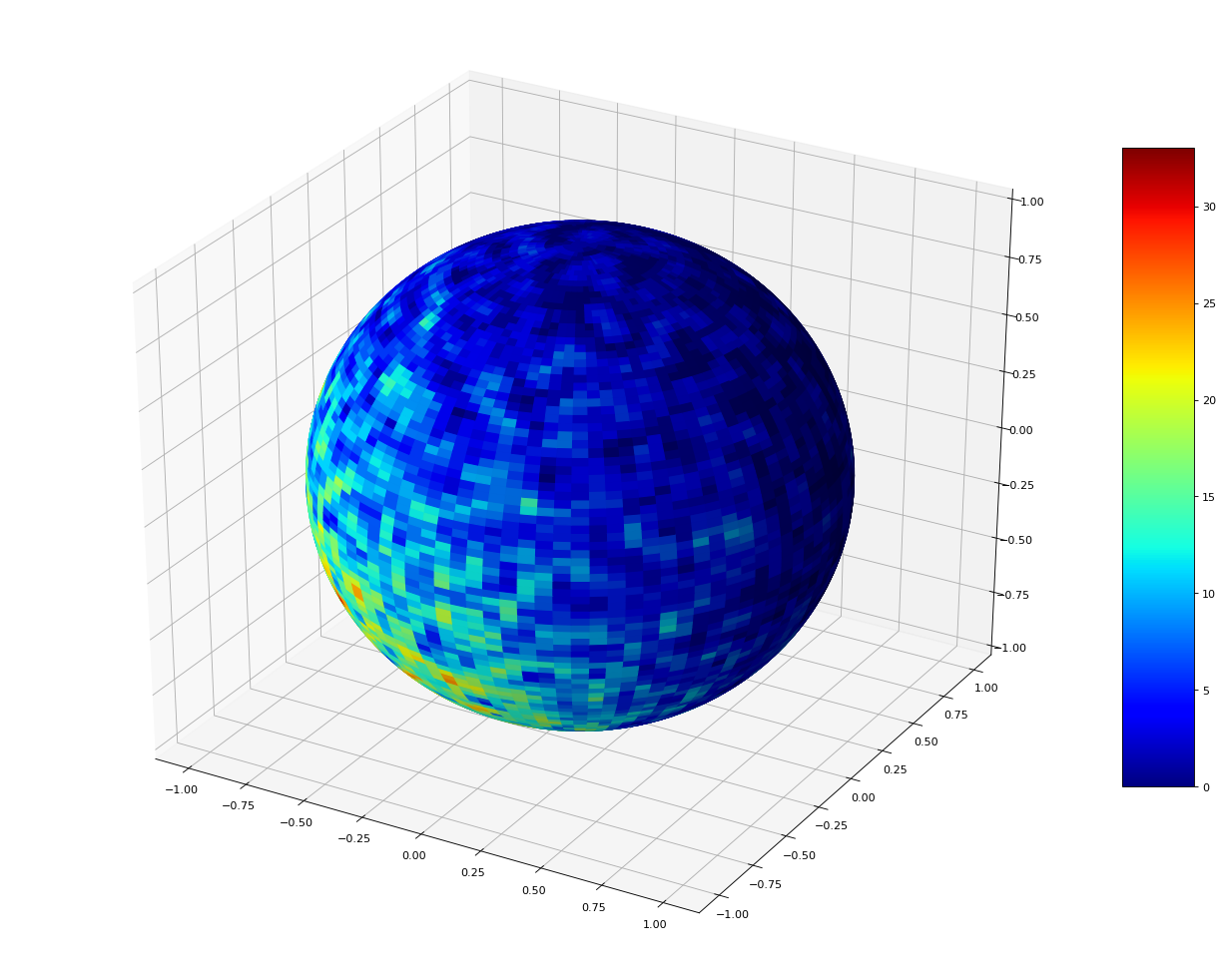}}
\hspace{0.1\textwidth}
\subfigure[$\Retr_x(v)$, iteration: 10k]{\label{fig:b}\includegraphics[width=0.4\textwidth]{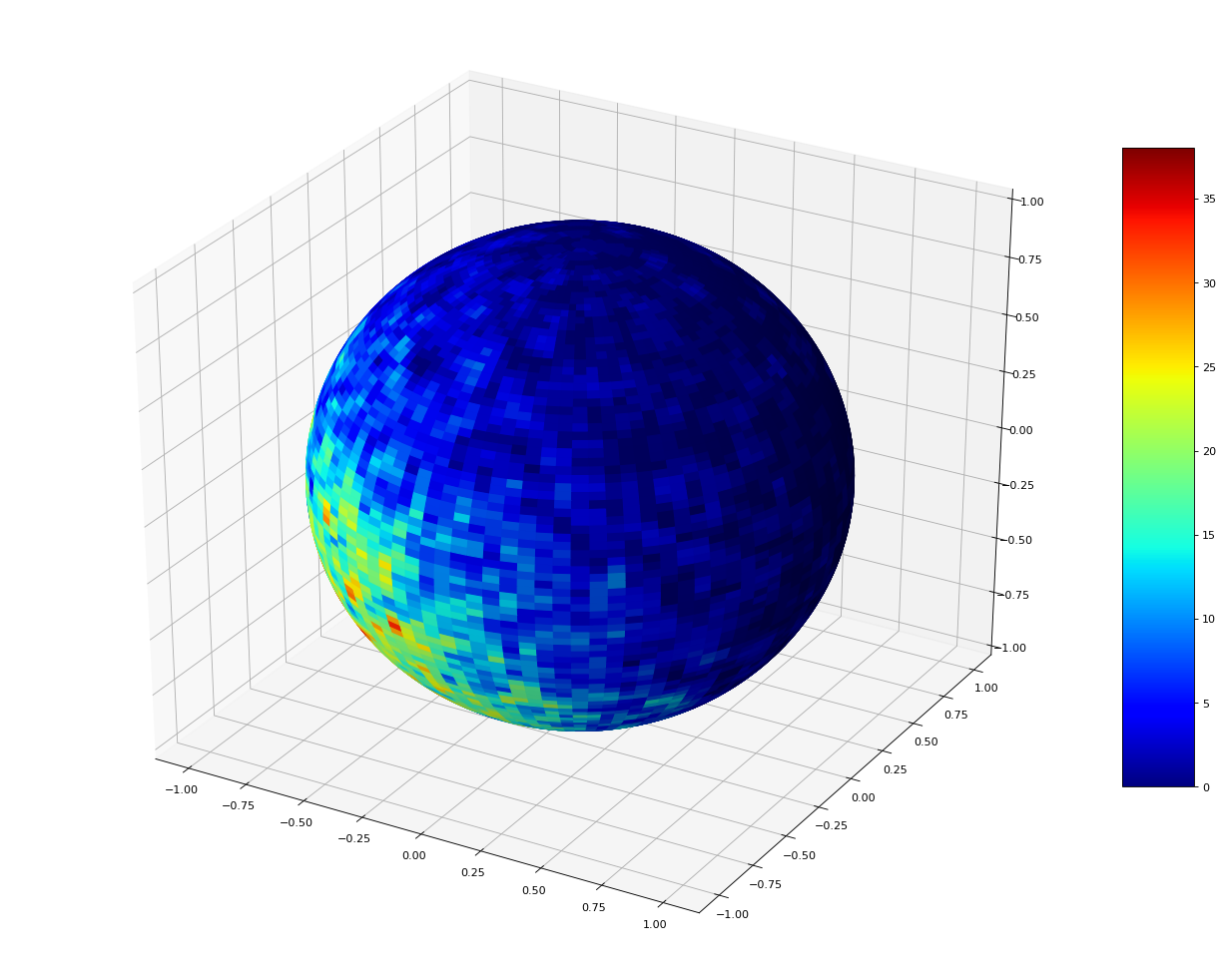}}
\subfigure[$\Exp_x(v)$, iteration: 100k]{\includegraphics[width=0.4\textwidth]{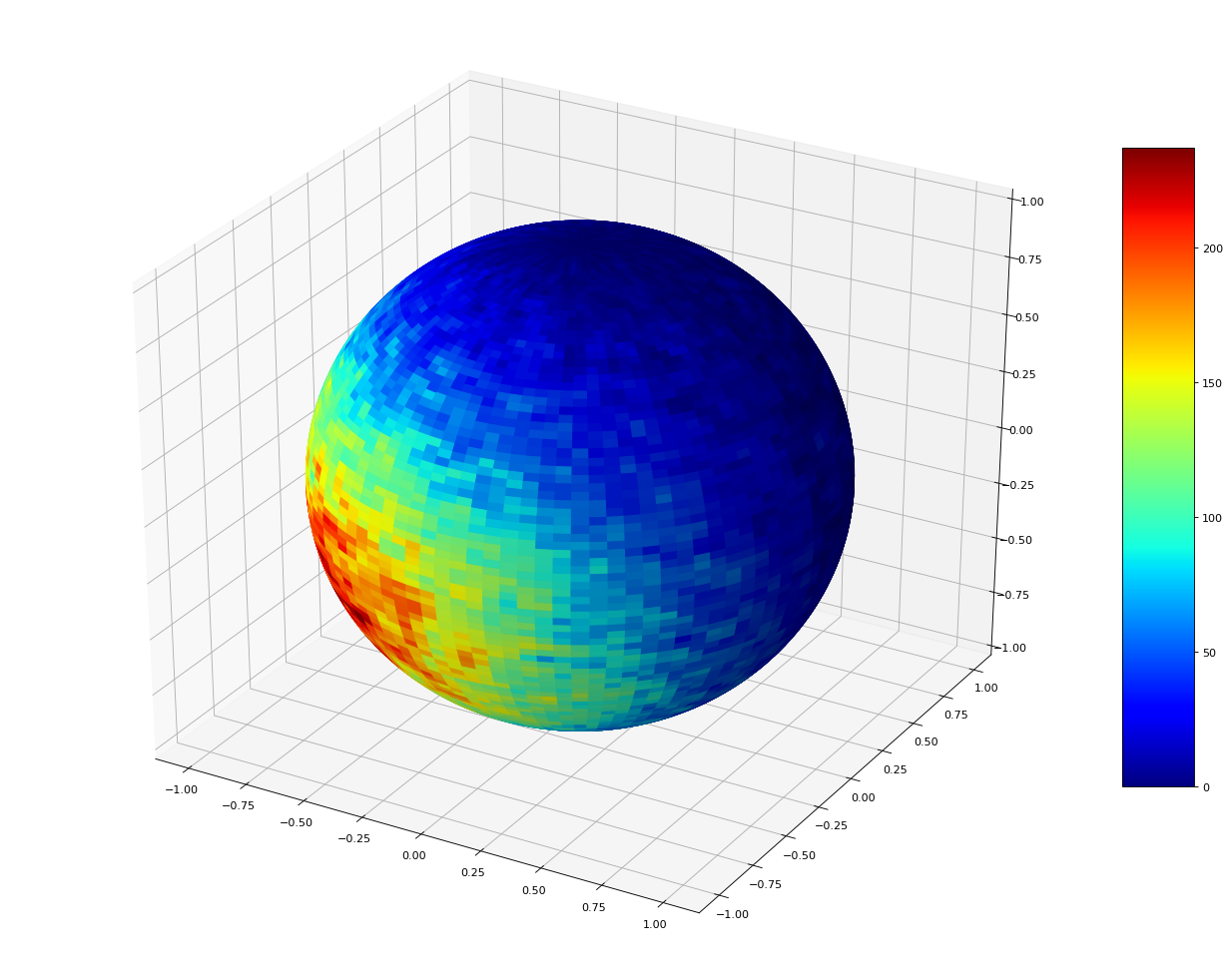}}
\hspace{0.1\textwidth}
\subfigure[$\Retr_x(v)$, iteration: 100k]{\includegraphics[width=0.4\textwidth]{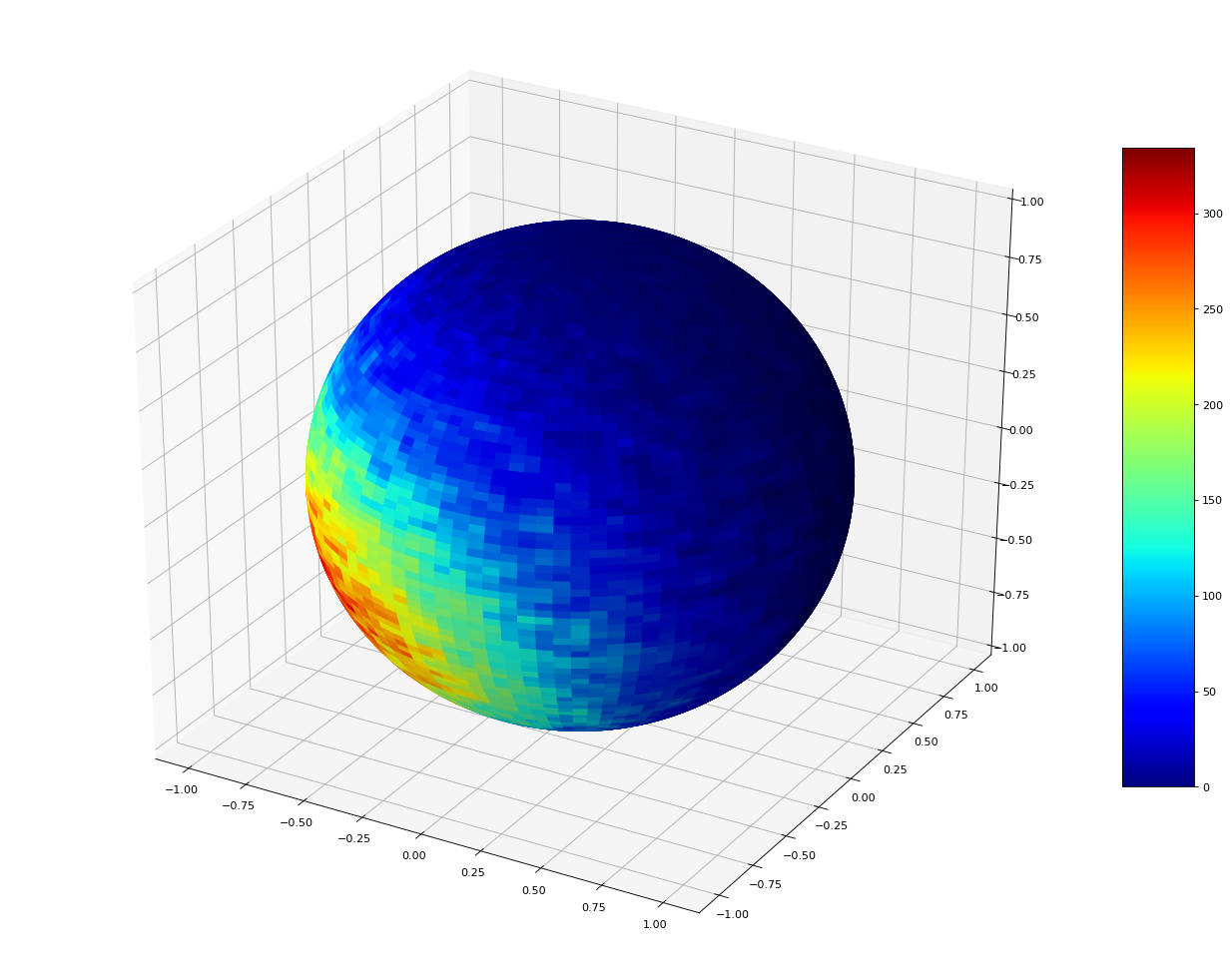}}
\caption{$f(x)=x_1+x_2+x_3$, stepsize $\epsilon=0.1$.}
\label{Fig2}
\end{figure}

\begin{figure}
\centering
\subfigure[Ideal distribution of $e^{-f}$]{\includegraphics[width=0.4\textwidth]{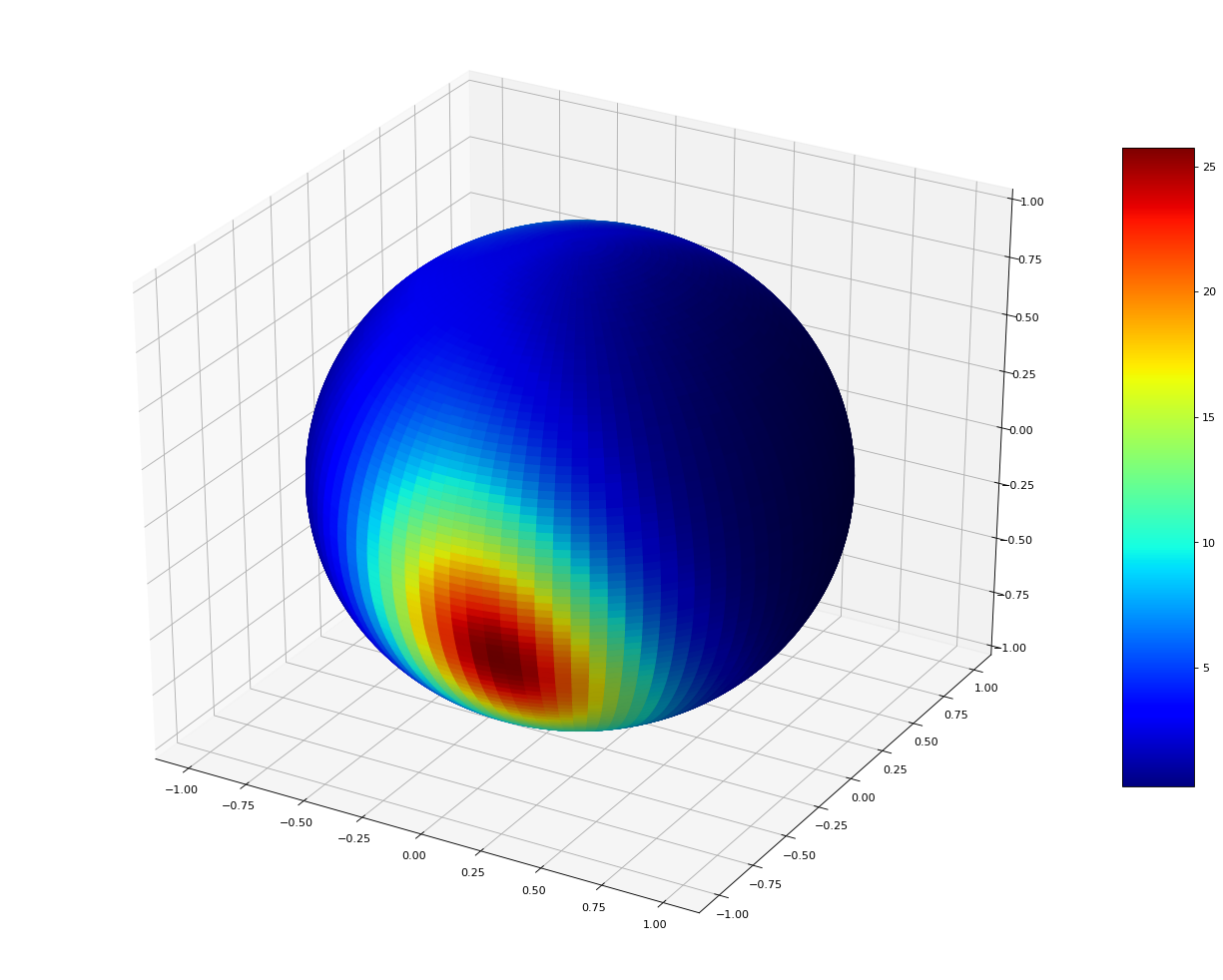}}
\\
\subfigure[$\Exp_x(v)$, iteration: 10k]{\label{fig:ax}\includegraphics[width=0.4\textwidth]{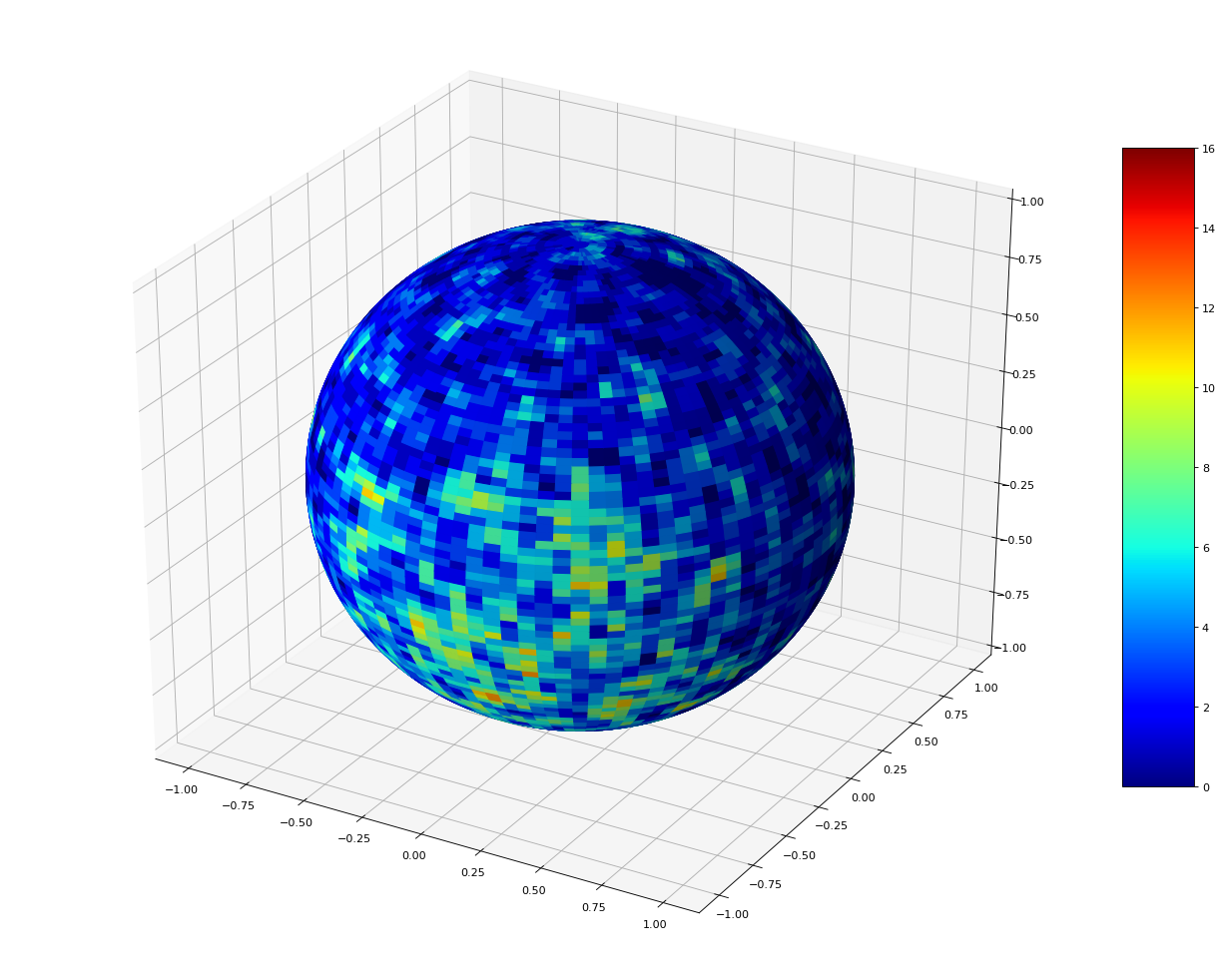}}
\hspace{0.1\textwidth}
\subfigure[$\Retr_x(v)$, iteration: 10k]{\label{fig:bx}\includegraphics[width=0.4\textwidth]{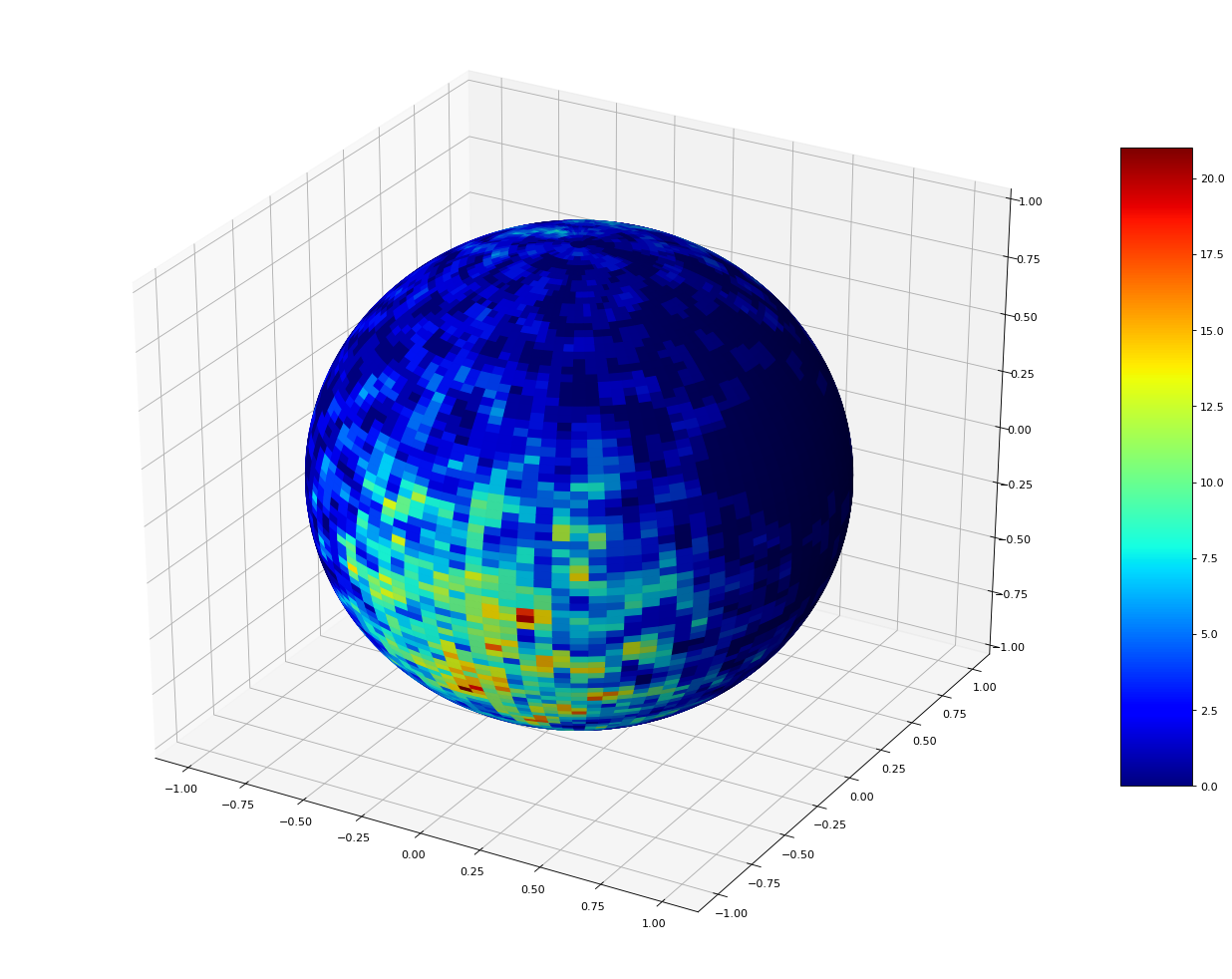}}
\subfigure[$\Exp_x(v)$, iteration: 100k]{\includegraphics[width=0.4\textwidth]{exp_poly3d_100ksamples_0dot1stepsize}}
\hspace{0.1\textwidth}
\subfigure[$\Retr_x(v)$, iteration: 100k]{\includegraphics[width=0.4\textwidth]{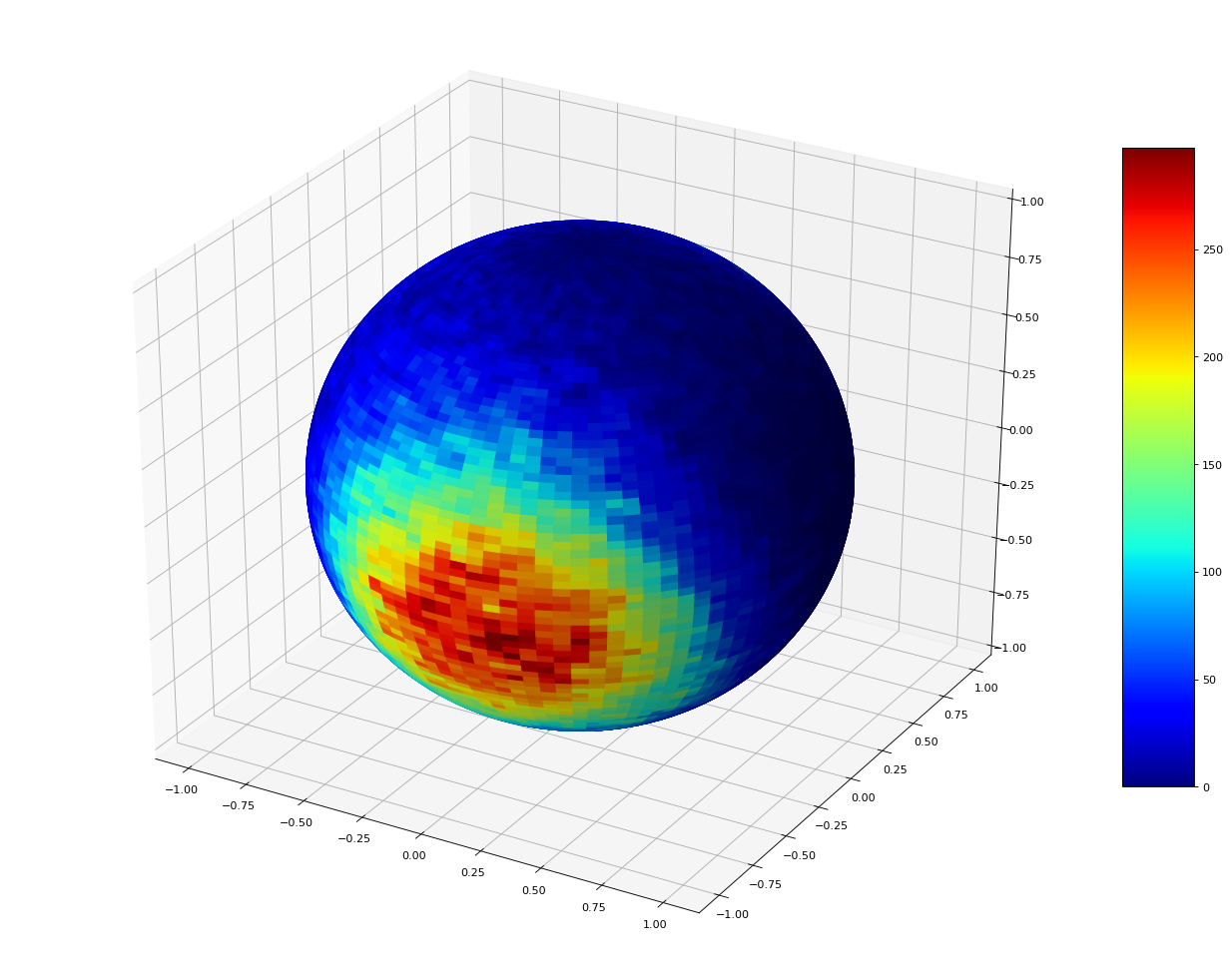}}
\caption{$f(x)=x_1^2+3.05x_2^2-0.9x_3^2+1.1x_1x_2+-1.02x_2x_3+2.1x_3x_1$, setpsize $\epsilon=0.1$.}
\label{Fig3}
\end{figure}

\end{document}